\documentclass{article}

\PassOptionsToPackage{sort,numbers,square}{natbib}

\usepackage[final]{neurips_2025}



\bibliographystyle{unsrtnat}

\usepackage[utf8]{inputenc} 
\usepackage[T1]{fontenc}    
\usepackage{hyperref}       
\usepackage{url}            
\usepackage{booktabs}       
\usepackage{amsfonts}       
\usepackage{nicefrac}       
\usepackage{microtype}      
\usepackage{xcolor}         
\usepackage{graphicx}       
\usepackage{algorithm,algorithmicx,algpseudocode}
\usepackage{amsmath}
\usepackage{amssymb}
\usepackage{mathtools}
\usepackage{amsthm}
\usepackage{tabularray}

\newtheorem{theorem}{Theorem}
\newtheorem{lemma}[theorem]{Lemma}  
\newtheorem{proposition}[theorem]{Proposition}

\title{Color Conditional Generation \\ with Sliced Wasserstein Guidance}

%


\author{
  \textbf{Alexander Lobashev\textsuperscript{1}} \quad
  \textbf{Maria Larchenko \textsuperscript{2}} \quad
  \textbf{Dmitry Guskov \textsuperscript{1, 3}} \\
  \textsuperscript{1}Glam AI, San Francisco, USA \\
  \textsuperscript{2}Magicly AI, Dubai, UAE, 
  \textsuperscript{3}McGill University, Montreal, Canada  \\
  \texttt{\{lobashevalexander, mariia.larchenko, guskov01dmitry\}@gmail.com}
}




\begin{document}

\maketitle


\begin{abstract}
We propose SW-Guidance, a training-free approach for image generation conditioned on the color distribution of a reference image. While it is possible to generate an image with fixed colors by first creating an image from a text prompt and then applying a color style transfer method, this approach often results in semantically meaningless colors in the generated image. Our method solves this problem by modifying the sampling process of a diffusion model to incorporate the differentiable Sliced 1-Wasserstein distance between the color distribution of the generated image and the reference palette. Our method outperforms state-of-the-art techniques for color-conditional generation in terms of color similarity to the reference, producing images that not only match the reference colors but also maintain semantic coherence with the original text prompt. 
Our source code is available at \url{https://github.com/alobashev/sw-guidance}.
\end{abstract}


\begin{figure}[ht]
    \centering
    \includegraphics[width=0.83\textwidth]{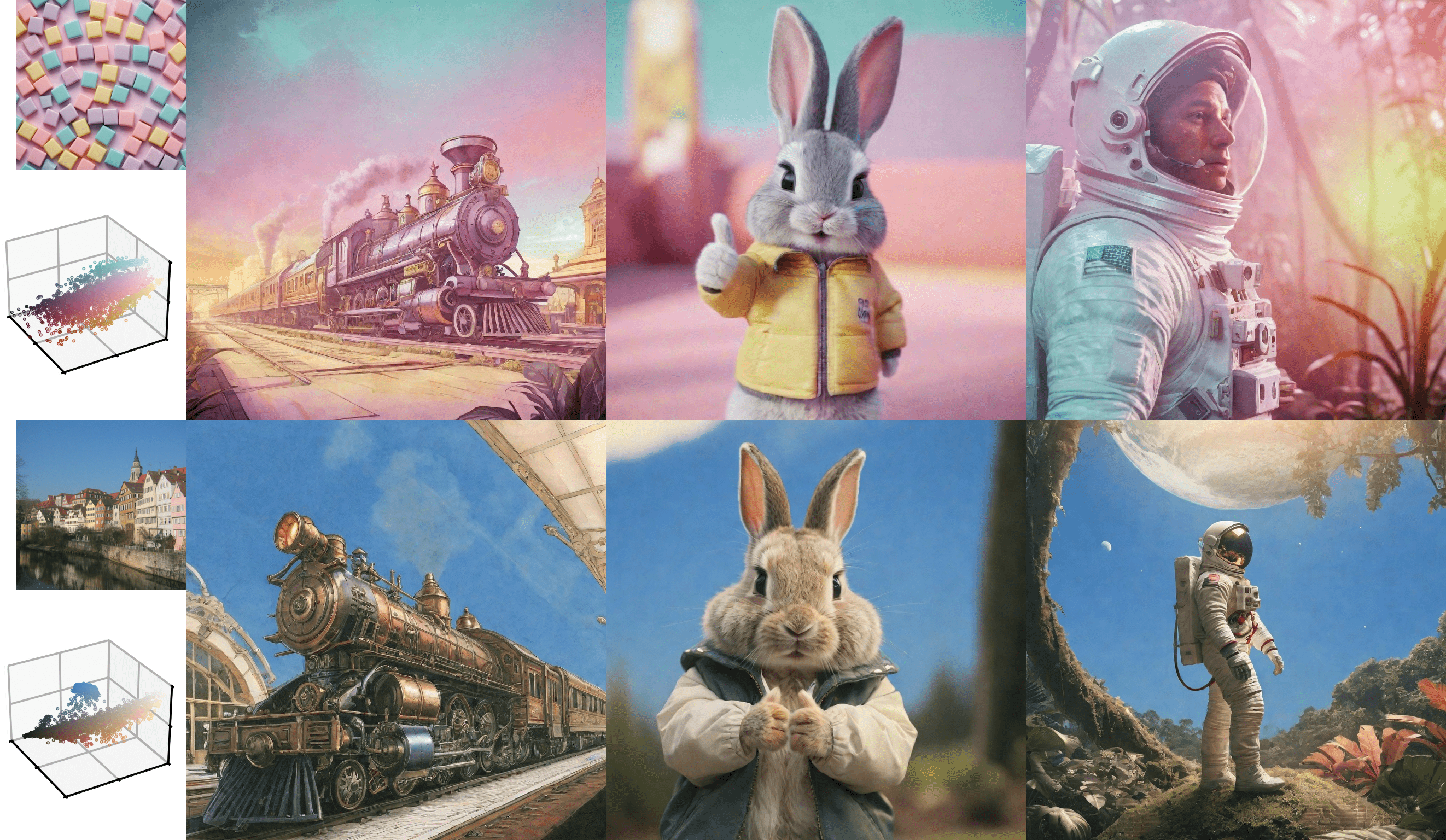}
    \caption{Color-conditional generation by Sliced Wasserstein guidance achieves unprecedented match with a reference color palette without transferring other stylistic features.}
    \label{fig:main_image}
\end{figure}

\section{Introduction}
\label{sec:Introduction}


To get a desired picture from text-to-image models we usually need a precise prompt and a bit of luck.
However, natural language is not expressive enough to accurately describe colors, and even specific terms such as ``turquoise blue'' yield varying tones.
Moreover, prompt length constraints make full palette descriptions impractical.
Using reference images for color styles addresses these limitations and establishes the color transfer problem, that is, applying a reference color style to a content image.

Color transfer is closely related to the artistic style transfer. 
Notably, artistic style is not linked to the depicted objects but is instead shared between patches of an image. 
This insight was utilized in the seminal work by Gatys \textit{et al.} \cite{gatys2016image}, where artistic style is defined as the distribution of activations in the VGG-19 network \cite{simonyan2014very}. To match the artistic style between generated and reference images, Gatys \textit{et al.} minimized the difference between the Gram matrices of activations from internal layers of VGG-19
(which is equivalent to matching the first two moments of the distributions of activations).

Strictly speaking, the style loss by Gatys \textit{et al.} is not a proper distance in the space of probability distributions, as, for instance, Jensen-Shannon \cite{wong1985entropy}, Total variation and various Wasserstein distances \cite{villani2009optimal}. Unfortunately, these metrics are hard to approximate in a differentiable fashion. 
To address the complications of Wasserstein metrics, a new family of metrics called Sliced Wasserstein (SW) distances was developed in 2012 \cite{rabin2012wasserstein, bonneel2015sliced}. 
First, Sliced Wasserstein distances are differentiable.
Second, they can be efficiently estimated from samples. 
Importantly, for bounded distributions, convergence of the Sliced Wasserstein distance implies the convergence of all moments.
However, in high-dimensional spaces the sliced approach requires a large number of projections to accurately estimate the distance.
To generalize the SW distance and enhance its performance in higher dimensions other its variants were proposed \cite{deshpande2019max, kolouri2019generalized, nguyen2020distributional, nguyen2022hierarchical, nguyen2024energy, nguyen2024sliced, tran2024stereographic, nguyen2024hierarchicalhybrid}.



Following Gatys \textit{et al.}, various CNN-based color transfer methods were proposed, such as DPST \cite{luan2017deep}, WCT \cite{li2017universal}, PhotoWCT \cite{li2018closed}, WCT2 \cite{yoo2019photorealistic}, PhotoNAS \cite{an2020ultrafast}, PhotoWCT2 \cite{chiu2022photowct2}, and DAST \cite{Hong_2021_ICCV}.
These algorithms can address the problem of color-conditional image generation, transferring reference colors to the image created by a text-to-image model.

Another way to achieve color conditioning is to control the generation process of a diffusion model \cite{dhariwal2021diffusion, ho2022classifier, nichol2021glide}. 
This problem setting is broadly called the stylized image generation. The approaches for stylized generation could be categorized into three groups:


\textbf{Modification of weights~~} 
The first group includes additive corrections of a model's weights, which require fine-tuning for every new style of images: Textual Inversion \cite{gal2022image}, DreamBooth \cite{ruiz2023dreambooth}, and LoRA \cite{hu2021lora}.
The introduction of ControlNet \cite{zhang2023adding} and T2I-Adapter \cite{mou2024t2i} in 2023 enabled adjustments of weights in a single pass of a hyper-network. ControlNets and adapters are trained on fairly large paired datasets and cover tasks such as pose, depth, and edge conditioning.

\textbf{Modification of attention~~} 
The examples of attention-related algorithms are IP-Adapter \cite{ye2023ip}, StyleAdapter \cite{wang2023styleadapter}, StyleDrop \cite{sohn2024styledrop}, StyleAligned \cite{hertz2024style}, InstantStyle \cite{wang2024instantstyle, wang2024instantstyleplus}. Training-free, they change attention output on each step and are effective for controlling structural and high-level features, such as painting style and composition, but do not target a color distribution separately.

\textbf{Modification of sampling~~} 
The third way to impose a condition is to add a new term to the denoising process. The first work of this kind was classifier guidance \cite{dhariwal2021diffusion}, which requires a specific classifier trained on noisy data samples\footnote{This guided denoising procedure resembles the optimization process of Gatys \textit{et al.}, which also generates an image from Gaussian noise by iterative denoising. In this case, the unconditional score function is equal to the gradient of a content loss, and the classifier guidance term corresponds to the gradient of a style loss.}.
Diffusion Posterior Sampling (DPS) \cite{chung2022diffusion} addresses the main weakness of classifier guidance by replacing a noisy classifier with a composition of a predicted noiseless image and a classifier trained on clean data (i.e., any pre-trained one). Universal Diffusion Guidance \cite{bansal2023universal} and FreeDoM \cite{yu2023freedom} generalize the DPS approach by replacing the MSE loss used by DPS with a general distance function.
These ideas were further developed in RB-Modulation \cite{rout2024rb}.

In current approaches to stylized image generation style and color conditioning are often entangled, making it challenging to control these aspects independently. 
Our goal is to propose a way to condition solely on color and independently control the palette and general style of an image. 

\textbf{Our Contributions~~} This work 
makes the following key contributions:
\begin{itemize}
    \item For the first time, we incorporate the differentiable Sliced Wasserstein distance and its generalizations into the conditioning of a diffusion model 
    \item We achieve state-of-the-art results in a problem of color-specific conditional generation, without transferring unwanted textures or other stylistic features (see Fig.~\ref{fig:main_image}).
\end{itemize}

\section{Background}
\label{sec:Background}

\subsection{Conditioning Process in Diffusion Models}

Diffusion models \cite{ho2020denoising, song2020score} are a class of generative models that learn to iteratively denoise a data distribution. To describe the conditioning process in diffusion models, we use Bayes' rule to express the posterior distribution in terms of the gradient of the log-likelihood and the unconditional score:
\begin{equation} \nabla_{x_{t}} \log p(x_{t}| y) = \nabla_{x_{t}} \log p(y|x_{t}) + \nabla_{x_{t}} \log p(x_{t}), \end{equation}
where $y$ represents the conditioning, and $x_{t}$ is the noisy sample at noise level $t$.

Diffusion Posterior Sampling (DPS) \cite{chung2022diffusion} introduced an approximation for the conditional likelihood based on a predicted noiseless sample, $\hat{x}_{0} = \mathbb{E}(x_{0}|x_{t})$, as
\begin{equation} 
p(y|x_{t}) \approx p(y|\hat{x}_{0}(x_{t})). 
\end{equation}

In the DPS approach, the authors considered the gradient of the log-likelihood as follows:
\begin{equation}
    \nabla_{x_{t}} \log p(y|\hat{x}_{0}(x_{t})) = -\frac{1}{\sigma^{2}} \nabla_{x_{t}} ||y - A(\hat{x}_{0}(x_{t}))||^{2},
\end{equation}
where $A$ is an operator, generally non-linear, that extracts the condition $y$ from the predicted noiseless sample $\hat{x}_{0}$, and $\sigma$ is a positive hyperparameter. For example, $A$ could extract the CLIP \cite{radford2021learning} embedding from $\hat{x}_{0}$, and $y$ could be a target prompt embedding.

Universal Diffusion Guidance \cite{bansal2023universal} and FreeDoM \cite{yu2023freedom} extend the DPS approximation by proposing a more general distance function $\mathcal{D}$ in the space of conditions $Y$. Specifically, for $y \in Y$, the gradient of the logarithm of the posterior distribution is given by:
\begin{equation}
    \nabla_{x_{t}} \log p(y|\hat{x}_{0}(x_{t})) = - \frac{1}{\sigma^{2}}\nabla_{x_{t}} \mathcal{D}\left(y, A(\hat{x}_{0}(x_{t}))\right).
\end{equation}

This formulation is more flexible and lets $y$ and $A(\hat{x}_{0}(x_{t}))$ to be a more complicated objects than vectors in $\mathbb{R}^{d}$ as long as we can define a differentiable distance function between them. In the next section, we will define the Sliced Wasserstein distance as a suitable distance $\mathcal{D}$ between two probability measures.

\subsection{Sliced Wasserstein distance}

A classical formulation of color transfer problem is to align two probability distributions in the 3-dimensional RGB space.
Specifically, the color distributions of a content image and a reference image can be represented as probability density functions, denoted by $\pi_0$ and $\pi_1$ respectively.
The objective in guided diffusion models is to match the generated sample's probability density $\pi_0$ with the reference $\pi_1$.

Wasserstein distances, rooted in optimal transport theory, appear to be natural for this task as they measure the cost of transporting one probability distribution to match another \cite{villani2009optimal}. The Wasserstein distance of order $p$ is
\begin{equation}
    \label{eq:wasserstein-p}
    \operatorname{W_p}(\pi_0, \pi_1) = \left( \inf_{\pi \in \Pi(\pi_0, \pi_1)} \int_{\mathcal{X}_0 \times \mathcal{X}_1}||x - y||^p \, d\pi(x, y) \right)^{1/p},
\end{equation}

Calculating $\operatorname{W_p}(\pi_0, \pi_1)$ for many samples can be computationally prohibitive, also a Wasserstein distance is hard to differentiate through, because its value is itself a result of an optimization procedure $\inf$ over all transport plans $\Pi(\pi_0, \pi_1)$, \emph{i.e.} over all joint distributions with marginals $\pi_0$ and $\pi_1$ .

To alleviate this issue, the Sliced Wasserstein (SW) distance was introduced \cite{rabin2012wasserstein}, offering a more computationally tractable alternative by reducing high-dimensional distributions to one-dimensional projections where the Wasserstein distance can be computed more straightforwardly. The Sliced $p$-Wasserstein distance is defined as \cite{rabin2012wasserstein, bonneel2015sliced}:
\begin{equation}
    \begin{split}
        SW_{p}(\pi_0, \pi_1) = \bigg(\int_{\mathbb{S}^{d-1}} W_{p}^p(P_\theta \pi_0, P_\theta \pi_1) \, d\theta \bigg)^{1/p},
    \end{split}
\end{equation}
where \(\mathbb{S}^{d-1}\) is the unit sphere in \(\mathbb{R}^d\) with $\int_{\mathbb{S}^{d-1}} d\theta = 1$, \(P_\theta\) is a linear projection onto a one-dimensional subspace defined by $\theta$ and $W_{p}^p$ is an ordinary p-Wasserstein distance by Eq.\ref{eq:wasserstein-p}.

\section{Method}
\label{sec:Method}

\begin{figure*}[t]
    \centering
    \includegraphics[width=0.85\linewidth]{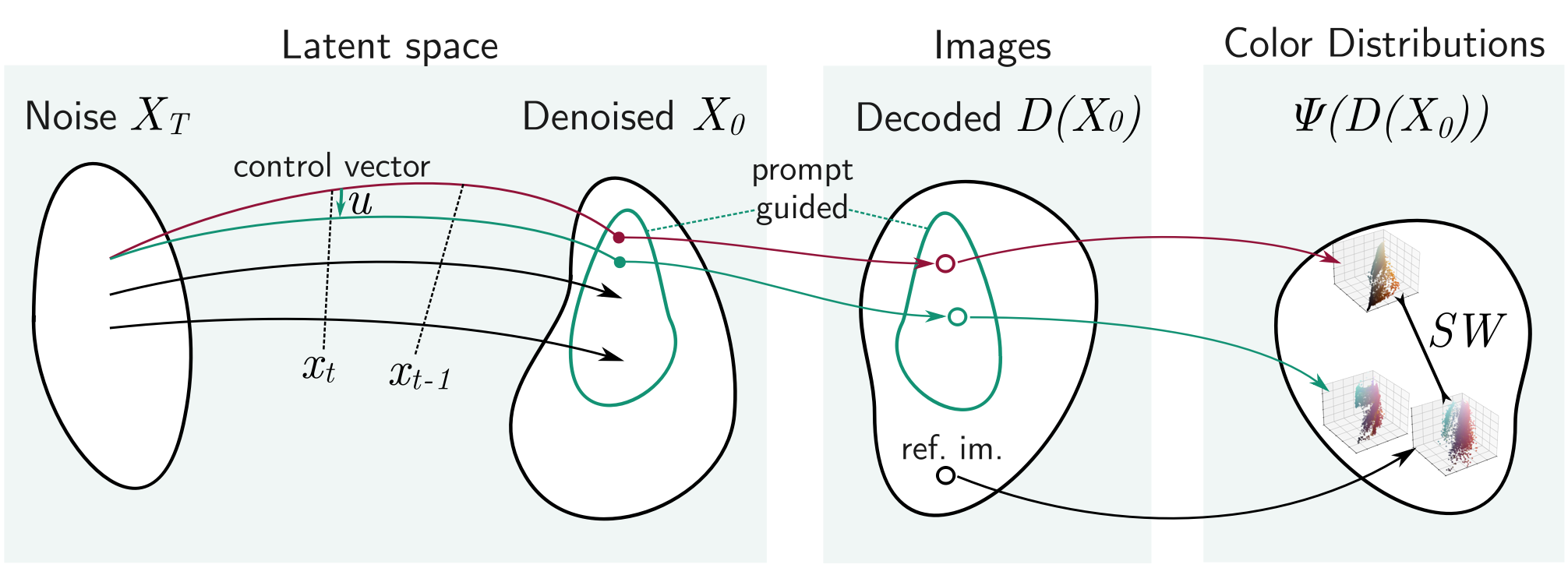}
    \caption{General scheme of the Slices Wasserstein Guidance for a latent diffusion model with decoder $D$ and feature extractor $\Psi$.}
    \label{fig:main_scheme}
    
\end{figure*}

Below we give a detailed description of SW-Guidance algorithm. We placed some necessary theoretical fact, Proposition \ref{lemma:1}
and Lemma \ref{lemma:3}, at the end of this section.

The general scheme of the algorithm is illustrated in Fig.~ \ref{fig:main_scheme}. 
We denote $x_T, \dots, x_0$ as the latent states of our diffusion sampling, where $x_T$ is a sample from a normal distribution, $x_0$ is a noiseless sample and $\hat{x}_0(x_t)$ is a prediction of $x_0$ for given $x_t$. 
$D(x_0)$ is a decoded image from the latent space of diffusion to the real image domain.
Lastly, $\Psi$ is a feature extractor, which in our case is the color distribution of an image in RGB color space.

\begin{algorithm}[ht]
\caption{Color Conditional Generation with Sliced Wasserstein Guidance}
\label{alg:short_diffusion}
\begin{algorithmic}[1]
    \State \textbf{Initialize} latent vector $x_T \sim \mathcal{N}(0, I)$, set learning rate $\lambda_{\text{lr}}$, $y$ - samples from the reference color distribution
    \For{$t = T$ to $1$}
        \State $u \gets \mathbf{0}$ \Comment{Initialize control vector}
        \For{$j = 1$ to $M$}
            \State $x'_t \gets x_t + u$
            \State Get prediction of last latent $\hat{x}_0 \gets \text{DDIM}(t, x'_t)$
            \State Get $\hat{y}_0 \gets \text{VAE}(\hat{x}_0)$ \Comment{Decode latent to image}
            \For{$k = 1$ to $K$} \Comment{Sliced Wasserstein}
                \State Project samples on a random direction $\theta$
                \State Update loss $\mathcal{L} \gets \mathcal{L} + \sum | \text{cdf}_{\hat{y}_0} - \text{cdf}_y |$
            \EndFor
            
            \State Update control vector $u \gets u - \lambda_{\text{lr}} \nabla_{u}\mathcal{L}(u)$
        \EndFor
        
        \State Update latent $x^*_t \gets x_t + u$
        \State Get denoised latent $x_{t-1} \gets \text{DDIM}(t, x^*_t)$
    \EndFor
\end{algorithmic}
\end{algorithm}

The proposed Algorithm \ref{alg:short_diffusion} initializes a noise tensor $x_T$ sampled from a latent normal distribution. 
Over $T$ diffusion timesteps, the noise tensor is iteratively refined. 
Following each denoising step, a predicted result $\hat{x}_0$ is decoded to obtain an image $\hat{y}_0 = D(\hat{x}_0)$. 
To modulate guided diffusion within each timestep, we add an auxiliary control tensor $u$ following \cite{rout2024rb}. 
That is, $u$ is initialized with zeros and we set 
$x'_t = x_t + u$. 
Then we predict original sample $\hat{x}_0(x'_t) = \hat{x}_0(u)$ and compute gradient of the Sliced Wasserstein distance (SW) between the color distribution of a reference image $\pi_{\text{ref}}$ and the predicted $\hat{y}_0(u) = D(\hat{x}_0(u))$ with a respect to $u$
\begin{equation}
\label{eq:main_loss}
\mathcal{L}(u) = \operatorname{SW}_1(\pi_{\hat{y}_0(u)}, \pi_{\text{ref}}),
\end{equation}

The control vector $u$ is optimized over $M$ steps to shift the reverse diffusion process toward the reference's color distribution. This optimization accumulates gradients w.r.t $u$ $M$ times and minimizes the loss function $\mathcal{L}$, Eq.\ref{eq:main_loss}. Let us note that by Lemma \ref{lemma:3} the minimization of the loss $\mathcal{L}$ will lead to a weak convergence of generated color distribution $\pi_{\hat{y}_0(u)}$ towards the reference $\pi_{\text{ref}}$ with the convergence of all moments.

The loss function can incorporate first two moments (mean $\mu$ and covariance $\sigma$) of $\pi_{\hat{y}_0(u)}$ and $\pi_{\text{ref}}$
\begin{equation}
\label{eq:moments_loss}
\mathcal{L}(\hat{y}_0(u)) = (\mu_{\hat{y}_0} - \mu_{\text{ref}})^2 + (\sigma_{\hat{y}_0} - \sigma_{\text{ref}})^2 + \operatorname{SW}(\pi_{\hat{y}_0}, \pi_{\text{ref}}),
\end{equation}

The impact of adding the first two moments (see Fig.~\ref{fig:ablations}), along with other variants of the SW distance such as Generalized \cite{kolouri2019generalized}, Distributional \cite{nguyen2020distributional}, and Energy-Based \cite{nguyen2024energy} SW distances, is studied in the Experiments section.

Let $u^\star$ be a shift, obtained after M steps of Eq. \ref{eq:main_loss} optimization. Then we set $x^\star_t = x_t + u^\star$ and perform usual DDIM \cite{song2021denoising} denoising step for $x^\star_t$ with classifier-free guidance to obtain $x_{t-1}$. Full algorithm for a latent diffusion model with classifier-free guidance is listed in the Appendix.

\textbf{Efficient Computation of Sliced Wasserstein~~}
Let $F_0$ and $F_1$ be two cumulative distribution functions of 1-dimensional probability distributions $\pi_0$ and $\pi_1$. Then the Wasserstein distance of order $p$ between $\pi_0$ and $\pi_1$ has a form (Rachev and Rüschendorf, 1998, Theorem 3.1.2 \cite{rachev2006mass})
\begin{equation}
    \operatorname{W}_p(\pi_0, \pi_1) = \left( \int_0^1 \left| F^{-1}_0(y) - F^{-1}_1(y) \right|^{p} dy \right)^{\frac{1}{p}}
\end{equation}

Formally, it involves differentiable estimation of inverse cumulative density functions. However, in the case of $p=1$ the Proposition \ref{lemma:1} allows us to replace the difference of inverse CDFs by absolution difference of CDFs, making it much easier to compute
\begin{equation}
    \operatorname{W}_1(\pi_0, \pi_1) = \int_{-\infty}^{\infty} \left| F_0(x) - F_1(x) \right| dx
\end{equation}

Moreover, since all color distributions in RGB space have a compact support (unit cube), one can employ guarantees of Lemma \ref{lemma:3}, which in fact states a convergence of general p-Wasserstein distances given convergence of the 1-Wasserstein distance. These facts justify the selection of 1-Wasserstein instead of general $p$-Wasserstein.

\textbf{Differentiable Approximation of CDF~~}
We approximate the cumulative distribution function (CDF) by sorting samples from the distribution. Once the samples $\{x_i\}_{i=1}^n$ are sorted, the CDF can be directly obtained by assigning a rank to each sorted sample. For a given sample $x_i$, its rank (i.e., its position in the sorted array) divided by the total number of samples $n$ provides the CDF value at that point. If $\{x_{(i)}\}$ represents the sorted samples, the CDF at $x_{(i)}$ is given by:
\begin{equation}
    \text{CDF}(x_{(i)}) = \frac{i}{n}
\end{equation}
This sorting operation is differentiable, so the CDF is also differentiable. To achieve a good approximation of the true underlying CDF, a large number of samples $n$ is required.

\textbf{Theoretical Justification~~} We need Proposition 1 for efficient sampling, as it allows one to avoid computing the inverse CDF.

\begin{proposition}
    \label{lemma:1}
    Let \( F \) and \( G \) be two cumulative distribution functions. Then,
    \begin{equation}
        \int_0^1 \left| F^{-1}(t) - G^{-1}(t) \right| \, dt = \int_{\mathbb{R}} \left| F(x) - G(x) \right| \, dx,
    \end{equation}
    where \( F^{-1} \) and \( G^{-1} \) are the quantile functions (inverse CDFs) of \( F \) and \( G \), respectively.
\end{proposition}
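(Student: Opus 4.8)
The plan is to prove this identity by the classical ``horizontal area versus vertical area'' argument, recasting both sides as a single double integral over the plane through indicator functions and then applying Fubini--Tonelli. The geometric intuition is that the graph of the quantile function $F^{-1}$ is the reflection of the graph of $F$ across the diagonal, so the area enclosed between the two quantile curves (measured horizontally) must equal the area enclosed between the two CDFs (measured vertically).

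First I would record the elementary fact that for any two reals $a,b$,
\[
|a-b| = \int_{\mathbb{R}} \bigl| \mathbf{1}[a > x] - \mathbf{1}[b > x] \bigr| \, dx,
\]
since the integrand equals $1$ precisely on the interval strictly between $a$ and $b$ and $0$ elsewhere. Applying this pointwise with $a = F^{-1}(t)$ and $b = G^{-1}(t)$ turns the left-hand integrand into an integral over $x$. The crux is then the duality (Galois) relation between a CDF and its generalized inverse: with the convention $F^{-1}(t) = \inf\{x : F(x) \ge t\}$, monotonicity and right-continuity of $F$ yield the equivalence $F^{-1}(t) \le x \iff t \le F(x)$, hence $\mathbf{1}[F^{-1}(t) > x] = \mathbf{1}[F(x) < t]$, and likewise for $G$. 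Substituting gives
\[
\bigl| F^{-1}(t) - G^{-1}(t) \bigr| = \int_{\mathbb{R}} \bigl| \mathbf{1}[F(x) < t] - \mathbf{1}[G(x) < t] \bigr| \, dx.
\]

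Next I would integrate this over $t \in [0,1]$ and interchange the order of integration by Fubini--Tonelli, which is licensed because the integrand is non-negative. For each fixed $x$, setting $p = F(x)$ and $q = G(x)$ (both in $[0,1]$), the inner integral $\int_0^1 \bigl| \mathbf{1}[t > p] - \mathbf{1}[t > q] \bigr| \, dt$ equals $|p - q| = |F(x) - G(x)|$ by the same elementary identity restricted to $[0,1]$. This collapses the double integral to $\int_{\mathbb{R}} |F(x) - G(x)| \, dx$, establishing the claim.

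The main obstacle is the careful treatment of the generalized inverse: the Galois equivalence must be stated with the correct $\inf$-convention and justified from monotonicity and right-continuity of $F$, so that it remains valid when $F$ or $G$ has jumps or flat stretches (that is, for distributions that are neither strictly increasing nor continuous). The endpoints $t \in \{0,1\}$ are a null set and may be ignored, and the identity should be read as an equality in $[0,\infty]$ in case either side is infinite; everything else is a routine application of Tonelli's theorem to non-negative integrands.
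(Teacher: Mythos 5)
Your proof is correct and follows essentially the same route as the paper's: the layer-cake identity $|a-b|=\int|\mathbf{1}[a>x]-\mathbf{1}[b>x]|\,dx$, the Galois duality $F^{-1}(t)\le x \iff t\le F(x)$ (the paper's Lemma on quantile preimages, likewise relying on right-continuity), and a Fubini--Tonelli swap. The only cosmetic difference is that by working with strict-inequality indicators you avoid the paper's extra step of flipping $I_{\ge}$ to $1-I_{<}$ and adjusting strict versus non-strict inequalities on a null set.
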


Lemma 2 provides the theoretical foundation for our optimization procedure for multidimensional Borel probability measures \( \mu_n \) and \( \mu \) on \( \mathbb{R}^d \).

\begin{lemma}
    \label{lemma:3}
    Let \( \mu_n \) and \( \mu \) be Borel probability measures on the unit cube in \( \mathbb{R}^d \). If the numerical sequence
    \begin{equation}
         \lim_{n \rightarrow \infty} \operatorname{SW}\left(\mu_n, \mu\right) = 0
    \end{equation}
    then the sequence \( \mu_n \) converges to \( \mu \) weakly, and all moments of \( \mu_n \) converge to the moments of \( \mu \).
\end{lemma}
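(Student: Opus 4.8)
The plan is to reduce the $d$-dimensional weak convergence to a statement about one-dimensional projections, which is exactly the information that $\operatorname{SW}$ controls. Writing $\operatorname{SW}^p(\mu_n,\mu)=\int_{\mathbb{S}^{d-1}} W_p^p(P_\theta\mu_n,P_\theta\mu)\,d\theta$, the hypothesis says that the nonnegative function $\theta\mapsto W_p^p(P_\theta\mu_n,P_\theta\mu)$ tends to $0$ in $L^1(\mathbb{S}^{d-1})$. First I would pass to a subsequence along which this convergence holds pointwise for almost every direction $\theta$. Because each $P_\theta\mu_n$ and $P_\theta\mu$ is supported in the compact interval obtained by projecting the unit cube, one-dimensional $W_p$-convergence is equivalent to weak convergence of the projected measures; hence $P_\theta\mu_n\to P_\theta\mu$ weakly for almost every $\theta$.

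The delicate point is that the Cram\'er--Wold device requires control of projections in every direction, whereas $\operatorname{SW}$ only supplies almost every direction. I would close this gap using tightness. Since all $\mu_n$ live in the unit cube, the family is tight, so by Prokhorov's theorem every subsequence admits a further subsequence converging weakly to some Borel measure $\nu$ on the cube. Continuity of the linear projection $P_\theta$ preserves weak convergence, so $P_\theta\mu_{n_k}\to P_\theta\nu$ weakly for every $\theta$; comparing with the previous paragraph forces $P_\theta\nu=P_\theta\mu$ for almost every $\theta$. A finite measure on $\mathbb{R}^d$ is determined by its one-dimensional projections — equivalently, $\widehat{\nu}(s\theta)=\widehat{P_\theta\nu}(s)$, and the characteristic functions are uniformly Lipschitz thanks to the bounded support — so equality of projections for almost every $\theta$, upgraded by continuity in $\theta$ to every $\theta$, yields $\widehat{\nu}=\widehat{\mu}$ and thus $\nu=\mu$. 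As every subsequence therefore has a sub-subsequence converging weakly to $\mu$, the full sequence converges weakly to $\mu$.

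For the moments, I would use that the common compact support makes each monomial $x^{\alpha}$ coincide on the cube with a bounded continuous function; weak convergence then gives $\int x^{\alpha}\,d\mu_n\to\int x^{\alpha}\,d\mu$ for every multi-index $\alpha$, which is the asserted convergence of all moments. The main obstacle is the injectivity step of the middle paragraph: reconciling the merely almost-everywhere information supplied by $\operatorname{SW}$ with the everywhere-in-direction hypothesis of Cram\'er--Wold. The clean way through is to exploit the uniform Lipschitz (equicontinuity) bound on the characteristic functions $\widehat{\mu_n}$, which follows from the fixed compact support and lets one pass from almost every $\theta$ to every $\theta$ by approximating a given direction by directions lying in the full-measure set.
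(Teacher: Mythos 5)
Your proof is correct, but it takes a genuinely different route from the paper's. The paper argues quantitatively: it extends the measures from the cube to a ball $B(0,R)$ with $R=\sqrt{d}$ and invokes Bonnotte's comparison inequality (Lemma 5.1.4 of the thesis \emph{Unidimensional and Evolution Methods for Optimal Transportation}), $\operatorname{W}_1(\mu,\nu)\le C_d\,R^{d/(d+1)}\operatorname{SW}_1(\mu,\nu)^{1/(d+1)}$, so that $\operatorname{SW}_1(\mu_n,\mu)\to 0$ forces $\operatorname{W}_1(\mu_n,\mu)\to 0$ by the squeeze theorem; weak convergence then follows from the fact that $\operatorname{W}_1$ metrizes weak convergence on a bounded set (Villani, Theorem 6.9), and moments are handled exactly as you do, via boundedness of monomials on the cube. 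Your argument instead is soft: $L^1(\mathbb{S}^{d-1})$ convergence of the projected distances gives a.e.-direction convergence along a subsequence, compact support upgrades one-dimensional $W_p$ convergence to weak convergence of projections, Prokhorov supplies a weak limit point $\nu$, and Cram\'er--Wold plus continuity (Lipschitzness) of the characteristic functions closes the a.e.-versus-everywhere gap to force $\nu=\mu$, after which the subsequence principle finishes. What the paper's route buys is an explicit quantitative modulus (a H\"older-type bound of $\operatorname{W}_1$ by $\operatorname{SW}_1$), at the cost of relying on a nontrivial external inequality tied specifically to $\operatorname{SW}_1$; what your route buys is self-containedness from textbook facts and the flexibility that it works verbatim for any $\operatorname{SW}_p$, $p\ge 1$ (indeed it only needs control of almost every direction), which actually fits better with the paper's later remark that the lemma holds for all the sliced-Wasserstein variants used in the ablations. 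One presentational point to fix: the pointwise-a.e. subsequence extraction must be performed \emph{inside} each arbitrary subsequence (extract first by Prokhorov and by a.e. convergence, then identify the limit), rather than once on the full sequence as your first paragraph suggests; as written the two extractions do not obviously interleave, though the repair is routine.
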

Proofs for Proposition \ref{lemma:1} and Lemma \ref{lemma:3} are provided in the Appendix.

\section{Experiments}
\label{sec:Experiments}

\begin{figure}[t]
    \includegraphics[width=0.5\columnwidth]{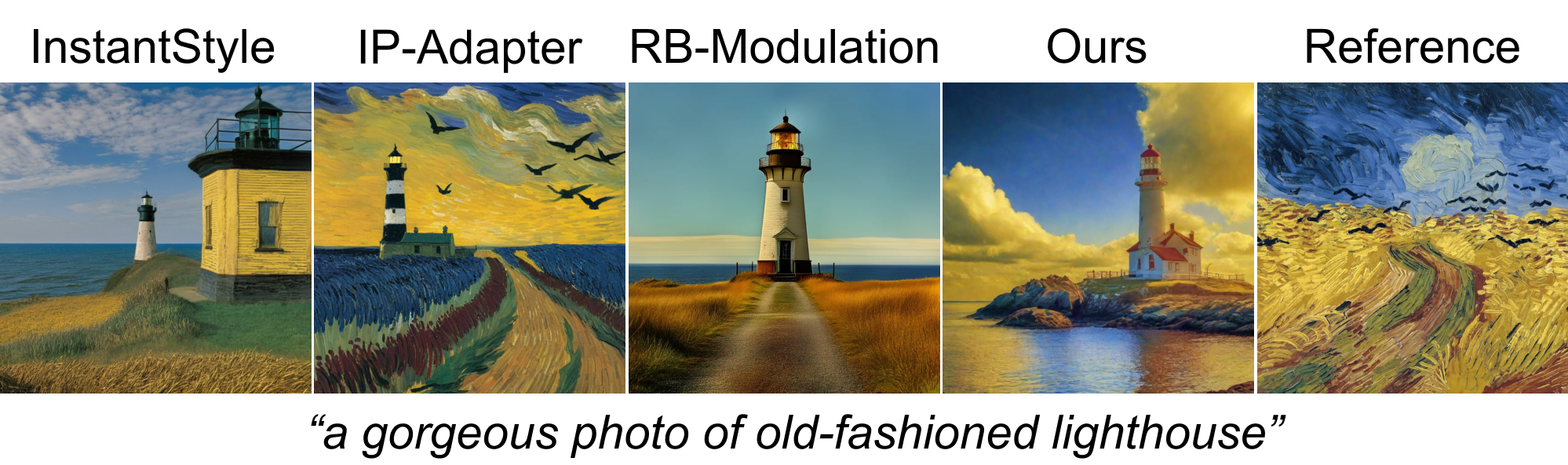} 
    \includegraphics[width=0.5\columnwidth]{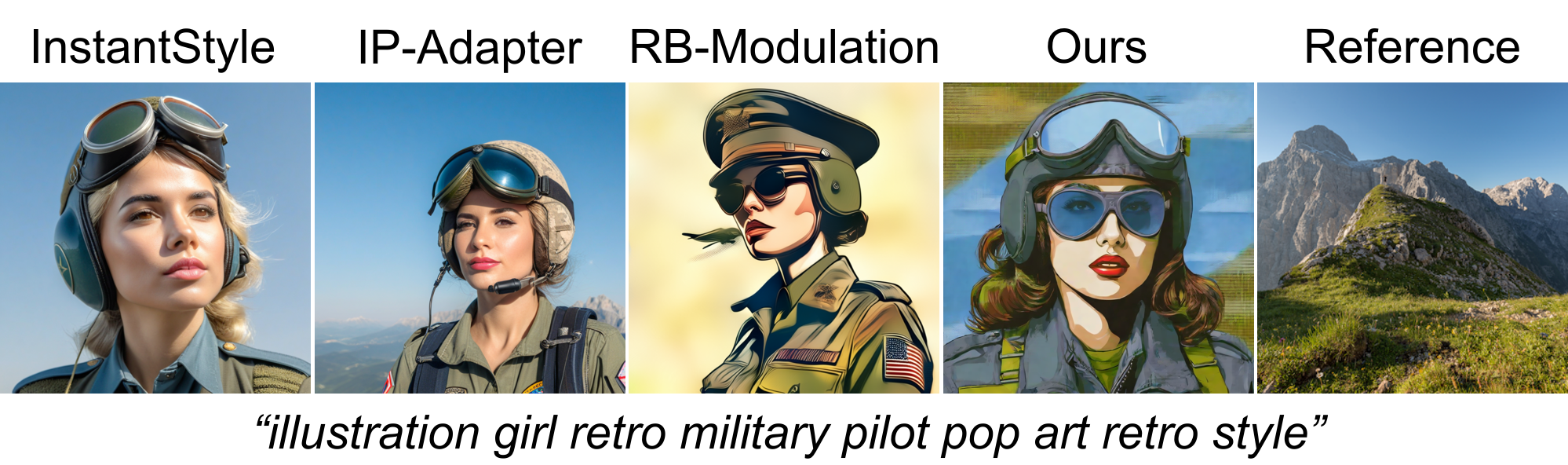}
    \includegraphics[width=0.48\columnwidth]{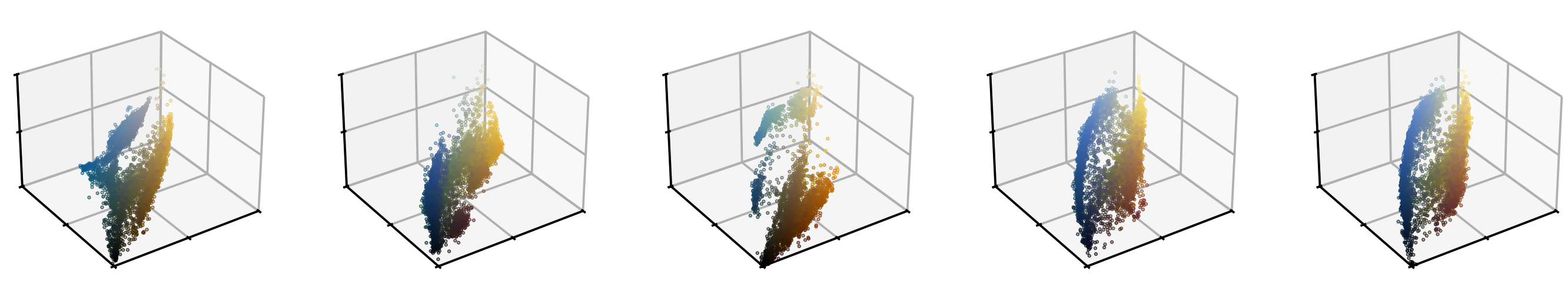}
    \hfill
    \includegraphics[width=0.48\columnwidth]{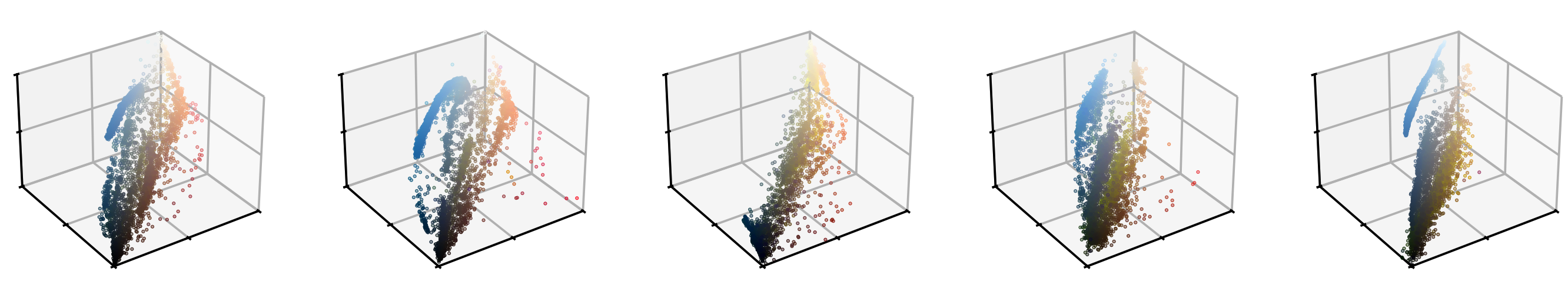}
    \caption{Comparison with stylized generation methods, SDXL. Other methods show weaker palette matching while transferring high-level features - such as brush strokes and wheat fields in example with lighthouses or photorealism in the second example.}
    \label{fig:lighthouses}
    
\end{figure}


As a successor to Universal Diffusion Guidance \cite{bansal2023universal}, the proposed method is not tied to a specific architecture and can be paired with latent or pixel-space diffusion models. 
For our experiments we have selected Stable Diffusion 1.5 \cite{rombach2022high} and Stable Diffusion XL \cite{podell2024sdxl} (Dreamshaper-8 \cite{dreamshaper8} and RealVisXL-V4 \cite{RealVisXL}) with the DDIM scheduler \cite{song2021denoising}.

\textbf{Test set~~} 
The experiments are conducted on images generated from the first 1000 prompts taken from the ContraStyles dataset \cite{contra_styles}. 
Our color references are 1000 photos from Unsplash Lite \cite{unsplash_lite}. We refer to these prompts and photos as the test set. A training set is not needed for our algorithm.

\textbf{Metrics~~} 
To measure stylization strength, we calculate the Wasserstein-2 distance between color distributions in RGB space. 
Two content-related metrics are based on CLIP embeddings \cite{radford2021learning}. 
CLIP-IQA \cite{wang2022exploring} is a cosine similarity between a generated image and pre-selected anchor vectors that define ``good-looking'' pictures. 
CLIP-T \cite{radford2021learning} is a cosine similarity between CLIP representations of a text prompt and an image generated from this prompt. 
In other words, the CLIP-T score indicates whether a modified sampling process still follows the initial text prompt, while CLIP-IQA measures the overall quality of the pictures.

\begin{table}[t]
\caption{Quantitative evaluation, SDXL \cite{podell2024sdxl}. We measure palette similarity with 2-Wasserstein distance between the color distribution of the generated image and the reference image. CLIP-IQA and CLIP-T are quality and content scores. The color transfer methods \cite{plenopticam, gonzales1977gray, chiu2022photowct2, larchenko2024color, pitie2007linear, reinhard2001color, yoo2019photorealistic, an2020ultrafast} are applied to the unconditional SDXL generations. Note, that SW-Guidance has the highest CLIP-T among other stylized generation algorithms \cite{wang2024instantstyle, ye2023ip, rout2024rb}. For visual comparisons, see the Appendix.}

\label{tab:table-content-style-sdxl}
\begin{minipage}[h]{0.51\textwidth}
      \centering
      \begin{tabular}{ll}
        \toprule
        \multicolumn{2}{c}{2-Wasserstein distance \cite{villani2009optimal} $\downarrow$} \\
        \cmidrule(r){1-2}
        Algorithm        & mean $\pm$ std of mean \\
        \midrule
        SW-Guidance SDXL (ours) 	 & \textbf{0.0297} 	 $\pm$ 	 \textbf{0.0005}\\
        hm-mkl-hm  \cite{plenopticam} 	 & \underline{0.0543} $\pm$ \underline{0.0011}  \\
        hm  \cite{gonzales1977gray} 	 & $0.0856 \pm 0.0016$  \\
        PhotoWCT2  \cite{chiu2022photowct2} 	 & $0.1028 \pm 0.0014$  \\
        ModFlows  \cite{larchenko2024color} 	 & $0.1125 \pm 0.0016$  \\
        MKL  \cite{pitie2007linear} 	 & $0.1191 \pm 0.0017$  \\
        CT  \cite{reinhard2001color} 	 & $0.1333 \pm 0.0018$  \\
        WCT2  \cite{yoo2019photorealistic} 	 & $0.1347 \pm 0.0017$  \\
        PhotoNAS  \cite{an2020ultrafast} 	 & $0.1608 \pm 0.0017$  \\
        InstantStyle SDXL  \cite{wang2024instantstyle} 	 & $0.1758 \pm 0.0028$  \\
        IP-Adapter SDXL  \cite{ye2023ip} 	 & $0.2193 \pm 0.0032$  \\
        Unconditional SDXL  \cite{RealVisXL} 	 & $0.3824 \pm 0.0059$  \\
        \midrule
        RB-Modulation \cite{rout2024rb} \\ Stable Cascade & $0.3795 \pm 0.0133$  \\
        \bottomrule
      \end{tabular}
    \end{minipage}
    \begin{minipage}[h]{0.51\textwidth}
    \centering
      \begin{tabular}{ll}
        \toprule
        \multicolumn{2}{c}{Content scores} \\
        \cmidrule(r){1-2}
        CLIP-IQA \cite{wang2022exploring}  $\uparrow$ & CLIP-T \cite{radford2021learning} $\uparrow$ \\
        \midrule
        \underline{0.285} $\pm$ \underline{0.004} &  $0.270 \pm 0.002$\\
        $0.259 \pm 0.003$ &  $0.277 \pm 0.002$\\
        $0.244 \pm 0.003$ &  $0.282 \pm 0.002$\\
        $0.225 \pm 0.003$ &  $0.276 \pm 0.002$\\
        $0.257 \pm 0.003$ &  $0.282 \pm 0.002$\\
        $0.238 \pm 0.003$ &  $0.283 \pm 0.002$\\
        $0.230 \pm 0.003$ &  $0.284 \pm 0.002$\\
        $0.179 \pm 0.002$ &  \underline{0.288} $\pm$ \underline{0.002}\\
        $0.167 \pm 0.002$ &  $0.279 \pm 0.002$\\
        \textbf{0.332} $\pm$ \textbf{0.003} &  $0.238 \pm 0.002$\\
        $0.247 \pm 0.002$ &  $0.214 \pm 0.002$\\
        $0.239 \pm 0.003$ &  \textbf{0.294} $\pm$ \textbf{0.002}\\
        \midrule
        \\ $0.323 \pm 0.006$ &  $0.266 \pm 0.003$\\
        \bottomrule
      \end{tabular}
\end{minipage}

\end{table}

\textbf{Baselines~~}
As discussed earlier, the problem of color-conditional generation can be solved by first creating an image from a text prompt and then performing a color transfer with a specialized color transfer algorithm. Therefore, the largest family of baselines consists of algorithms of this kind applied to the output of SD1.5 and SDXL:
Histogram matching (hm) \cite{gonzales1977gray},  
CT \cite{reinhard2001color},  
MKL \cite{pitie2007linear, github_mkl},  
WCT2 \cite{yoo2019photorealistic},  
PhotoNAS \cite{an2020ultrafast},  
PhotoWCT2 \cite{chiu2022photowct2},  
ModFlows \cite{larchenko2024color}.  
The baseline ``hm-mkl-hm''  
is a combination of histogram matching and MKL taken from the library \cite{plenopticam}.
In addition, we take three of the currently available baselines for stylized generation:
IP-Adapter \cite{ye2023ip}, InstantStyle \cite{wang2024instantstyle}, and RB-Modulation \cite{rout2024rb}, though stylized generation is not exactly the problem we aim to solve. While recent work \cite{shum2025color} also proposes an algorithm for color conditional generation with diffusion models, we exclude direct comparisons due to absence of open-source implementation.
The term \textit{Unconditional} indicates that no post-processing steps or controls were applied. We provide CLIP-IQA and CLIP-T metrics for \textit{Unconditional SDXL} and \textit{Unconditional SD1.5} as a reference.

\textbf{Comparison Results~~}
The results in Table \ref{tab:table-content-style-sdxl} prove that SW-Guidance achieves superior performance in color-conditional generation compared to all baseline methods.
In particular, SW-Guidance has the minimal Wasserstein distance to the reference palette. 
At the same time, SW-Guidance has the highest CLIP-T among other algorithms for stylized generation (i.e. IP-Adapter, InstantStyle and RB-Modulation).
This indicates the ability of SW-Guidance to follow the prompt without adding irrelevant features from the reference image in contrast to other stylization methods.
In terms of overall image quality SW-Guidance holds the second place  according to CLIP-IQA.
Qualitative comparison with stylized generation is given in Fig.~\ref{fig:lighthouses}, with additional visual examples available in the Appendix. The Appendix also contains SD-1.5 performance scores and examples comparable to those shown in Table \ref{tab:table-content-style-sdxl}.

\begin{figure}[h]
  \centering
  \includegraphics[width=0.85\linewidth]{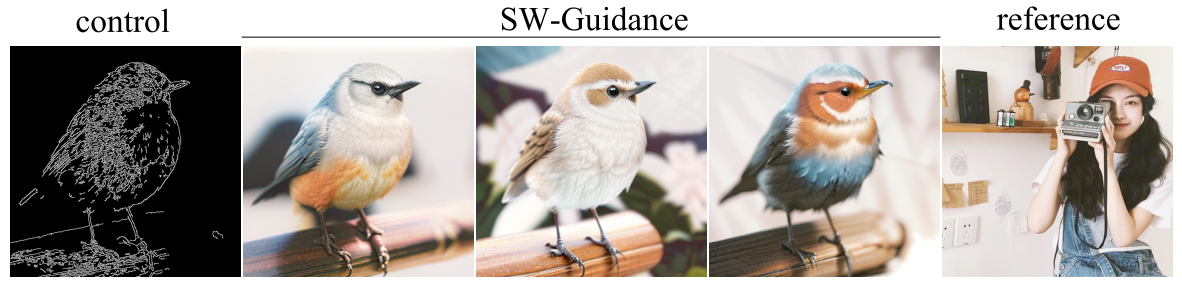}
  \includegraphics[width=0.85\linewidth]{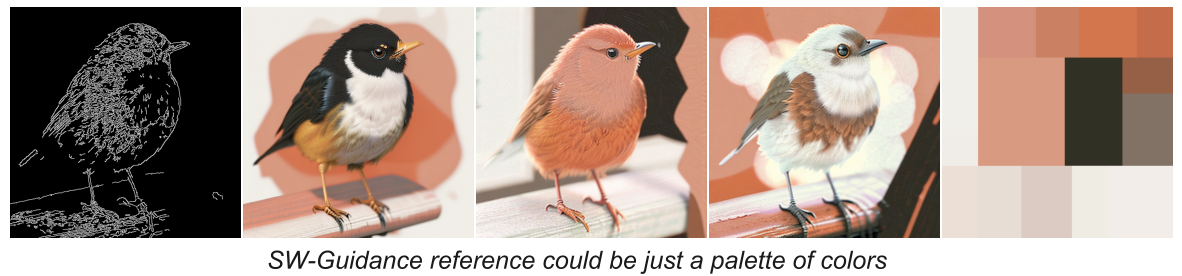}
  \caption{SW-Guidance combined with canny ControlNet, SD1.5.}
  \label{fig:controlnet_sw_SD15}
  
\end{figure}


\begin{figure}[ht]
    \centering
    \begin{minipage}[b]{0.45\linewidth}
        \centering
        \includegraphics[width=\textwidth]{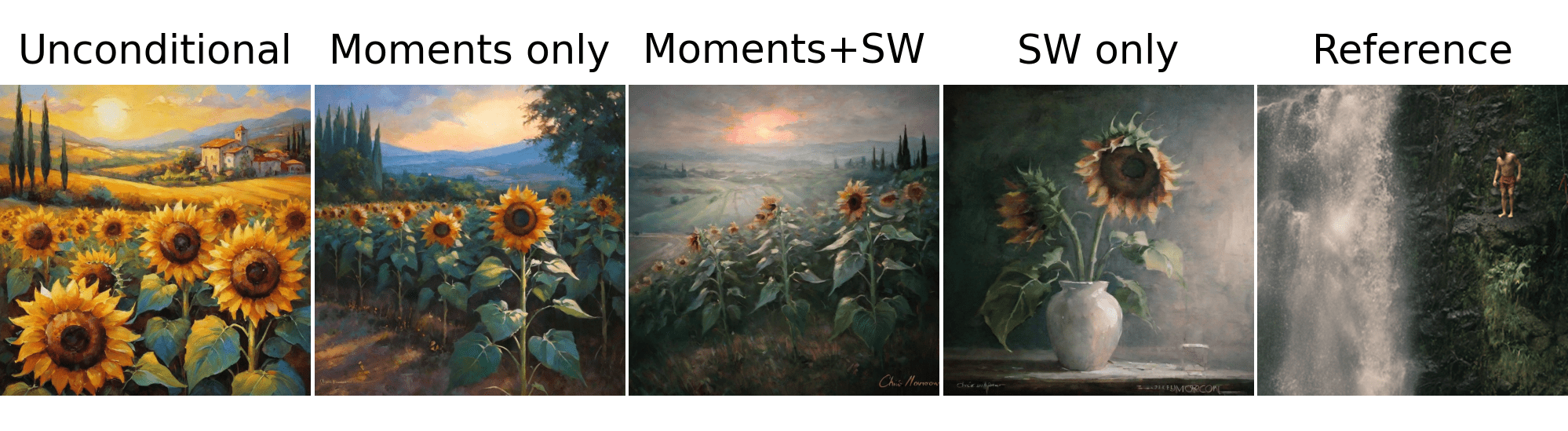}
        \includegraphics[width=\textwidth]{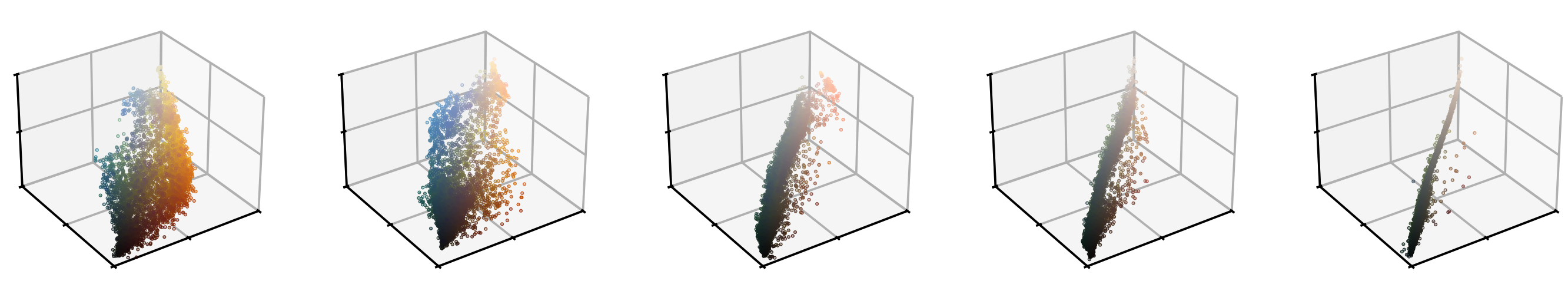}
    \end{minipage}
    \hfill
    \begin{minipage}[b]{0.45\linewidth}
        \centering
        \includegraphics[width=\textwidth]{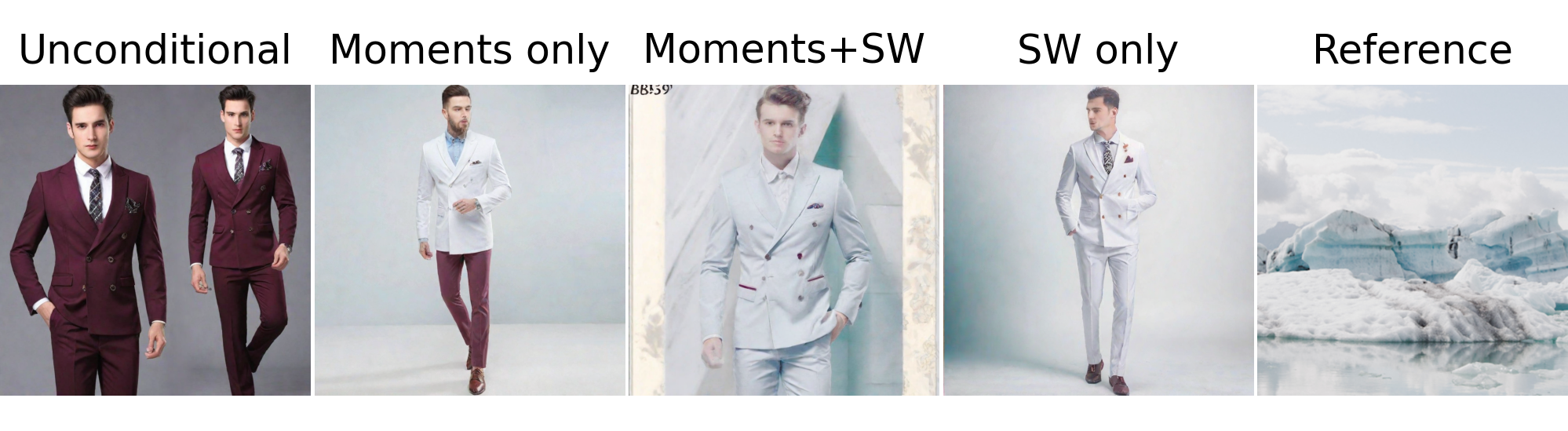}
        \includegraphics[width=\textwidth]{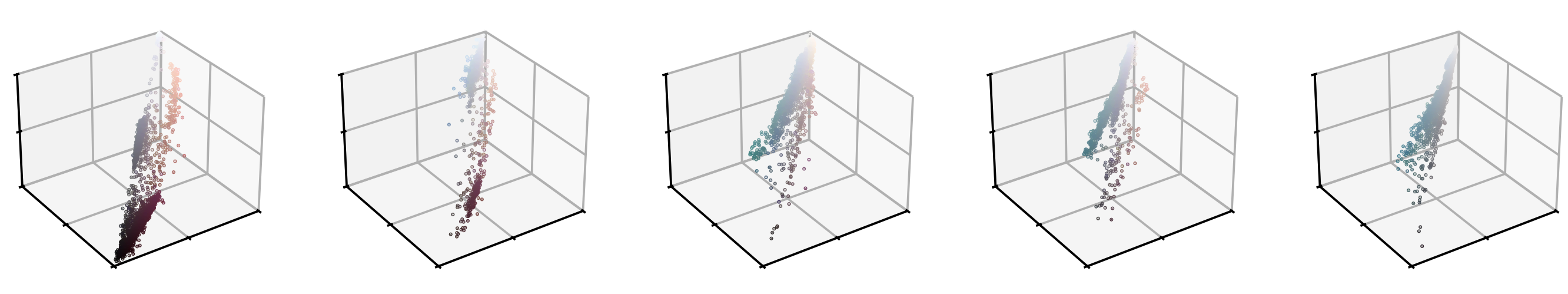}
    \end{minipage}
    
    \vspace{0.5cm}
    
    \begin{minipage}[b]{0.45\linewidth}
        \centering
        \includegraphics[width=\textwidth]{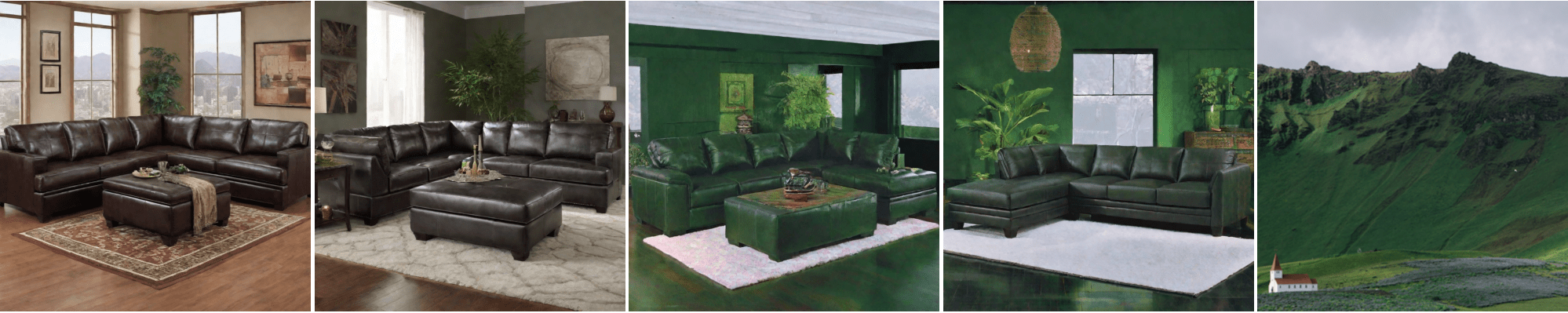}
        \includegraphics[width=\textwidth]{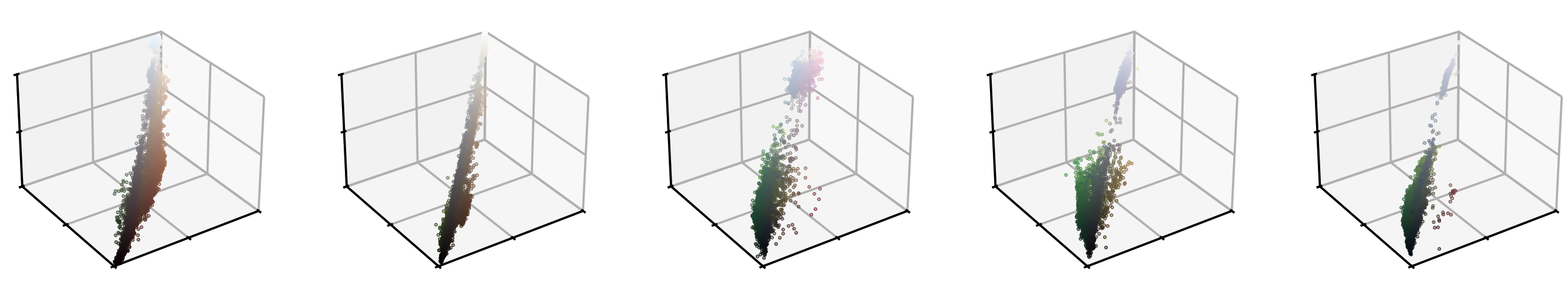}
    \end{minipage}
    \hfill
    \begin{minipage}[b]{0.45\linewidth}
        \centering
        \includegraphics[width=\textwidth]{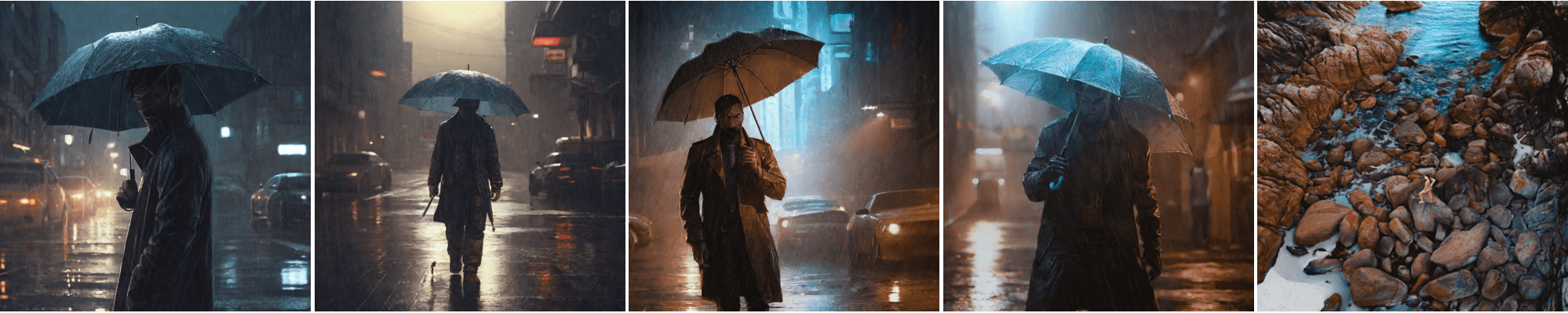}
        \includegraphics[width=\textwidth]{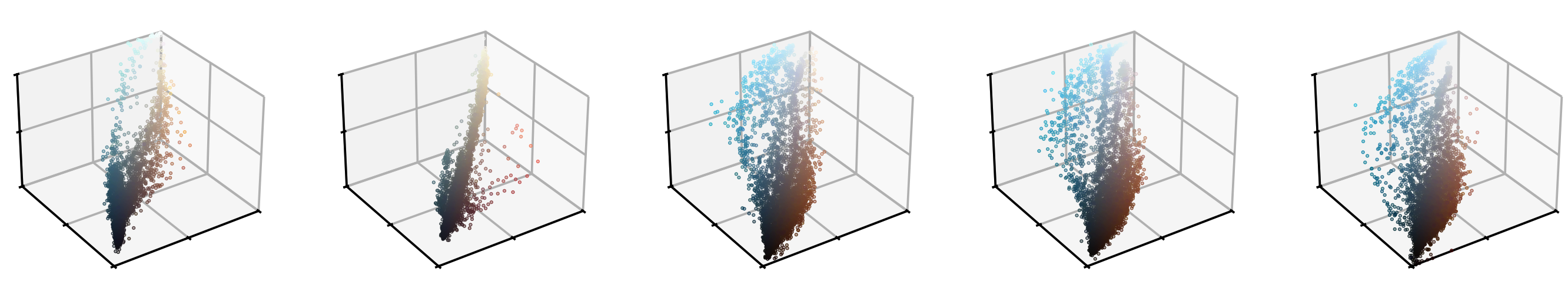}
    \end{minipage}
    \caption{Ablation studies, SDXL. 
    The best results are obtained with the loss function by Eq.\ref{eq:main_loss} (SW only).
    Moments-only guidance is insufficient. 
    Please refer to Table \ref{tab:table-ablations-sdxl} for the quantitative comparison.
    }
    \label{fig:ablations}
\end{figure}

\textbf{Compatibility with ControlNets}  SW-Guidance can be combined with other control methods to define the image layout, see Fig.~\ref{fig:controlnet_sw_SD15} for the canny control and Fig.~\ref{fig:controlnet_intantstyle} for the depth map control. Our method supports any picture, representing a palette, as in the second row, Fig.~\ref{fig:controlnet_sw_SD15}.

Note that stylizing algorithms, such as InstantStyle, transfer not only color but also other features (see Fig.~\ref{fig:lighthouses}), making it difficult to control color separately. Fig.~ \ref{fig:controlnet_intantstyle} shows that for InstantStyle the text prompt guiding the color is ignored because it contradicts the features of the reference image (i.e denim dress). Our method is more flexible and sets a red shade which aligns with the reference palette.

Relying only on text prompts for color control is inconvenient. Moreover, color naming is often connotative, and words like ``lavender'',  ``emerald'' and ``lime'' can introduce unintended content details, as shown in Fig.~\ref{fig:controlnet_connotative}. Please refer to the Appendix for more examples.
 
With all this said, we conclude that the proposed SW-Guidance is superior in color stylization while maintaining both integrity with the textual prompt and the quality of the produced images.

\begin{figure*}[t]
  \centering
    \includegraphics[width=0.49\linewidth]{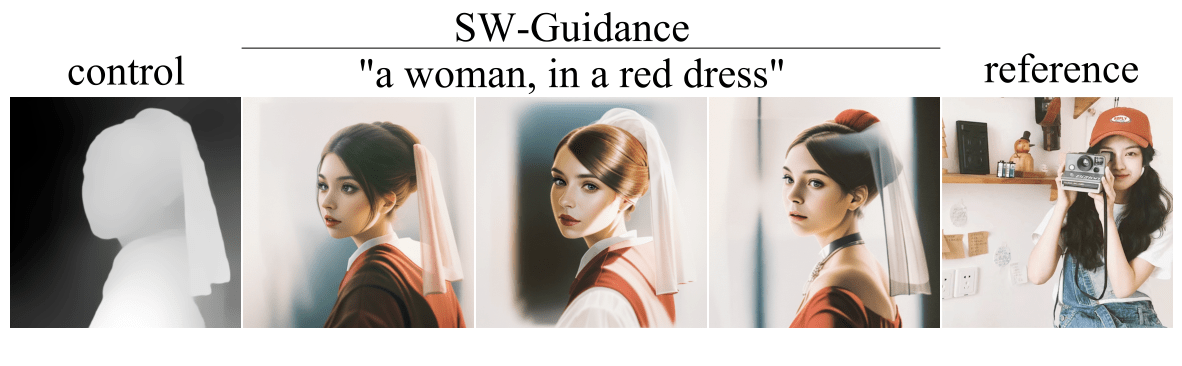}
    \includegraphics[width=0.49\linewidth]{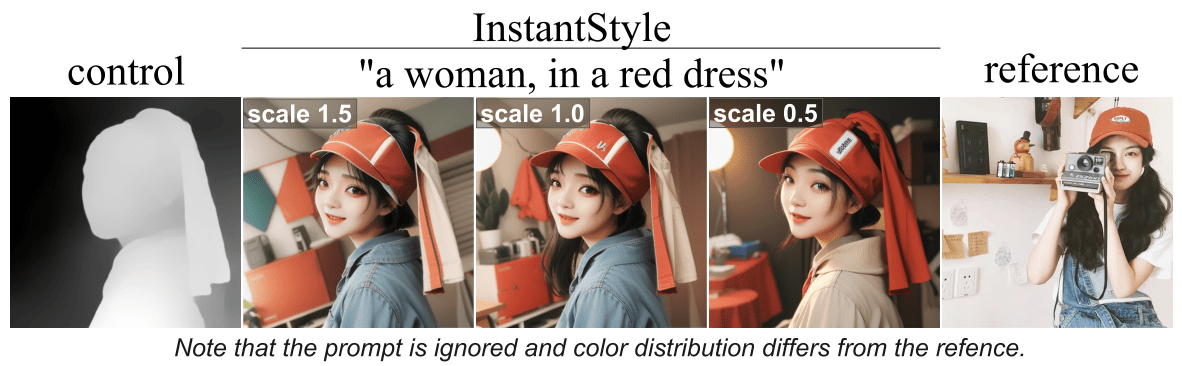}
    \caption{SW-Guidance combined with depth control has more flexibility than InstantStyle. SD1.5 model, scale is InstantStyle strength.}
    \label{fig:controlnet_intantstyle}
    
\end{figure*}

\subsection{Ablation study}

\begin{table}[hb]
\caption{Ablation study. SD-1.5. Analysis of different Sliced Wasserstein distances.}
\label{tab:table-ablations}
\begin{minipage}[ht]{0.5\textwidth}
      \centering
      \begin{tabular}{ll}
        \toprule
        \multicolumn{2}{c}{2-Wasserstein distance \cite{villani2009optimal} $\downarrow$} \\
        \cmidrule(r){1-2}
        Distance        & mean $\pm$ std of mean \\
        \midrule
        Sliced Wasserstein \cite{rabin2012wasserstein} 	 & \textbf{0.0385} 	 $\pm$ 	 \textbf{0.0006}\\
        Energy-Based SW \cite{nguyen2024energy} 	 & \underline{0.0390} 	 $\pm$ 	 \underline{0.0006}\\
        Distributional SW \cite{nguyen2020distributional} 	 & 0.0547 	 $\pm$ 	 0.0006\\
        Generalized SW \cite{kolouri2019generalized} 	 & 0.0879 	 $\pm$ 	 0.0014\\
        Mean \& Cov 	 & 0.1064 	 $\pm$ 	 0.0013\\
        \bottomrule
      \end{tabular}
    \end{minipage}
    \begin{minipage}[ht]{0.5\textwidth}
    \centering
      \begin{tabular}{ll}
        \toprule
        \multicolumn{2}{c}{Content scores} \\
        \cmidrule(r){1-2}
        CLIP-IQA \cite{wang2022exploring}  $\uparrow$ & CLIP-T \cite{radford2021learning} $\uparrow$ \\
        \midrule
        0.2220 	 $\pm$ 	 0.0027 	 & 0.2520 	 $\pm$ 	 0.0017\\
        \underline{0.2241} 	 $\pm$ 	 \underline{0.0030} 	 & 0.2535 	 $\pm$ 	 0.0017\\
        0.2225 	 $\pm$ 	 0.0030 	 & \underline{0.2564} 	 $\pm$ 	 \underline{0.0016}\\
        0.2098 	 $\pm$ 	 0.0027 	 & \textbf{0.2594} 	 $\pm$ 	 \textbf{0.0016}\\
        \textbf{0.2258} 	 $\pm$ 	 \textbf{0.0030} 	 & 0.2545 	 $\pm$ 	 0.0017\\
        \bottomrule
      \end{tabular}
\end{minipage}
\end{table}

\begin{table}[h]
\caption{Ablation study. SDXL. The impact of adding the first two moments to the SW distance (Eq.\ref{eq:moments_loss}), which is also shown in Fig.~ \ref{fig:ablations}}
\label{tab:table-ablations-sdxl}

\begin{minipage}[ht]{0.5\textwidth}
      \centering
      \begin{tabular}{ll}
        \toprule
        \multicolumn{2}{c}{2-Wasserstein distance \cite{villani2009optimal} $\downarrow$} \\
        \cmidrule(r){1-2}
        Distance        & mean $\pm$ std of mean \\
        \midrule
        SW only   	     & \textbf{0.0297} $\pm$ \textbf{0.0005}  \\
        Moments + SW   	 & \underline{0.0305} $\pm$ \underline{0.0006}  \\
        Moments only   	 & 0.1176 $\pm$ 0.0016  \\
        Unconditional SDXL  \cite{RealVisXL} 	 & 0.3824 $\pm$ 0.0056  \\
        \bottomrule
      \end{tabular}
    \end{minipage}
    \begin{minipage}[ht]{0.5\textwidth}
    \centering
      \begin{tabular}{ll}
        \toprule
        \multicolumn{2}{c}{Content scores} \\
        \cmidrule(r){1-2}
        CLIP-IQA \cite{wang2022exploring}  $\uparrow$ & CLIP-T \cite{radford2021learning} $\uparrow$ \\
        \midrule
        \textbf{0.285} $\pm$ \textbf{0.004} &  0.270 $\pm$ 0.002\\
        \underline{0.279} $\pm$ \underline{0.003} &  0.269 $\pm$ 0.002\\
        0.276 $\pm$ 0.003 &  \underline{0.282} $\pm$ \underline{0.002}\\
        0.239 $\pm$ 0.003 &  \textbf{0.294} $\pm$ \textbf{0.002}\\
        \bottomrule
      \end{tabular}
\end{minipage}
\end{table}

\textbf{Different Sliced Wasserstein distances~~}
Table \ref{tab:table-ablations} contains scores for the tested variants of Sliced Wasserstein (SW), each assessed under $K =10$ slices, $M=10$ iterations per scheduler step, and $lr=100$ learning rate. Let us note that Lemma \ref{lemma:3} holds for all of them. 
Please find their formal definition in Appendix section. 
In general, we didn't observe any substantial difference in their content scores. We can also note that, despite the time metric is absent in the table, Distributional Sliced Wasserstein (DSW) takes more time due to inner optimization loop. This suggests that although DSW and Generalized SW are aimed to converge faster for multidimensional distributions, this advantage does not translate to our 3D color transfer task. The Energy-Based SW \cite{nguyen2024energy} offered a computationally light alternative, though it lacks any clear advantage over regular SW for this application.

\textbf{Generation time~~} The generation time dependence on $M$ (inner steps) and $K$ (number of slices) for SD-1.5 is shown in Fig. \ref{fig:ablation_on_km}. For our main experiments we set $M = 10$ and $K = 10$, which results in 30 seconds for SD-1.5 and around 1 minute for SDXL to generate an image on Nvidia RTX 4090 GPU. This represents an improvement compared to the 2 minutes required by RB-modulation.

\textbf{Mean and covariance terms~~} 
The impact of adding the first two moments to the SW distance is presented in Fig.~\ref{fig:ablations} and Table~\ref{tab:table-ablations-sdxl}. The best results are obtained with the loss function by Eq.\ref{eq:main_loss} (SW only). Mean and covariance terms (Eq.\ref{eq:moments_loss}, Moments + SW) do not increase color similarity and tend to produce images of worse quality. Moments-only guidance is insufficient. 

\textbf{Dependence on learning rate~~} This experiment can be found in Appendix section.


\section{Limitations and Discussion}
\label{sec:Discussion}

The first important limitation of the proposed guidance is its sensitivity to the information about colors in text prompts, especially when they contradict the selected style reference. A clash between the textual and SW guidance typically results in visual artifacts, so detailed textual palette descriptions should be avoided. 

Secondly, combining this method with existing stylizing attention-based approaches is not guaranteed to work, as strong stylizing methods could also lead to a clash of color guidance. Ideally, other conditioning should be disentangled from the color information. This collision effect is a subject for further research. As an example, we provide a joint run of InstantStyle and SW-Guidance (Fig.~\ref{fig:limitations}).

The last point we would like to discuss is the current implementation's requirement to differentiate through a U-net. Theoretically, this requirement could be avoided, but like the previous point, it requires additional study.

To sum up, this paper presents SW-Guidance, a novel training-free technique for color-conditional generation that can be applied to a range of denoising diffusion probabilistic models. Our study covers the SD-1.5 and SDXL architectures, and for both implementations, we achieved superior results in color similarity compared to color transfer algorithms and models for stylized generation. Numerically, we show the ability of SW-Guidance to maintain integrity with the textual prompt and preserve the quality of the produced images. Our qualitative examples demonstrate the absence of unwanted textures and irrelevant features from the reference image.

\begin{figure}[ht]
    \centering
    \begin{minipage}[h]{0.45\linewidth}
        \centering
        \includegraphics[width=\textwidth]{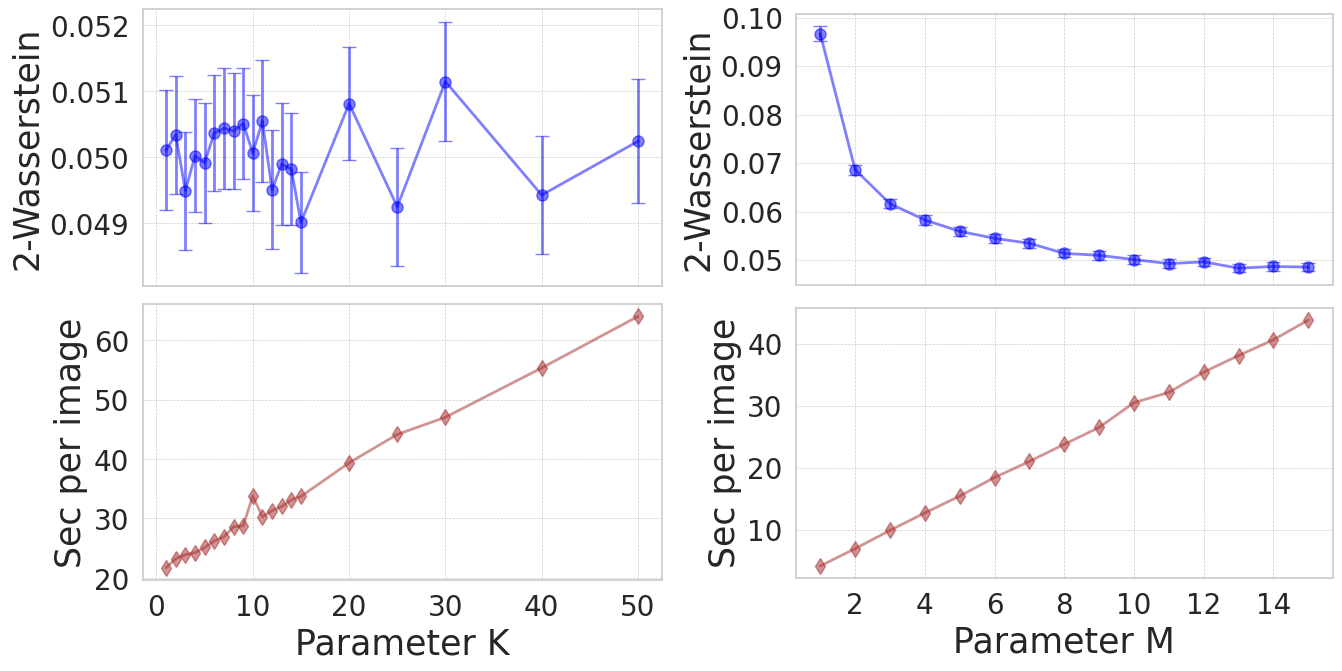}
        \caption{Ablation study for the dependence on $M$ (inner steps) and $K$ (number of slices) for SD-1.5. We use $M = 10$ and $K = 10$, which results in 30 seconds for SD-1.5 and around 1 minute for SDXL to generate an image on RTX 4090 GPU.}
        \label{fig:ablation_on_km}
    \end{minipage}%
    \hfill
    \begin{minipage}[h]{0.45\linewidth}
        \centering
        \includegraphics[width=\textwidth]{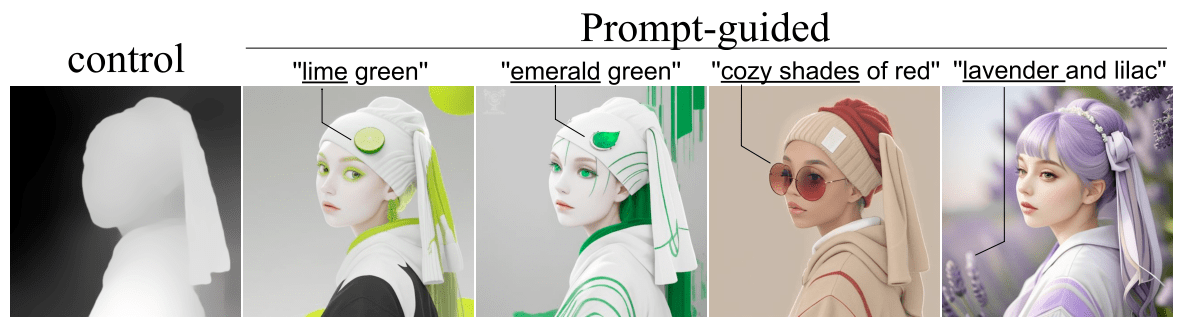}
        \caption{Text description of a color may introduce unwanted content details.}
        \label{fig:controlnet_connotative}
        \vspace{0.5cm}
        \includegraphics[width=\textwidth]{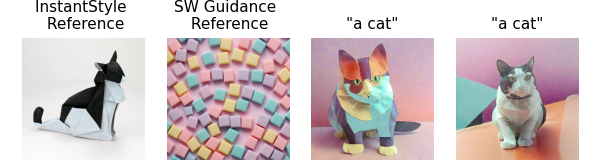}
        \caption{Limitations. Combination of SW-Guidance and InstantStyle SDXL.}
        \label{fig:limitations}
    \end{minipage}
\end{figure}


\bibliography{bibliography_main.bib}


\clearpage
\newpage
\section*{NeurIPS Paper Checklist}

\begin{enumerate}

\item {\bf Claims}
    \item[] Question: Do the main claims made in the abstract and introduction accurately reflect the paper's contributions and scope?
    \item[] Answer: \answerYes{} 
    \item[] Justification: The claims made in abstract and introduction are well supported in the main part. 

\item {\bf Limitations}
    \item[] Question: Does the paper discuss the limitations of the work performed by the authors?
    \item[] Answer: \answerYes{} 
    \item[] Justification: The paper contains the dedicated section, that discusses limitations.

\item {\bf Theory assumptions and proofs}
    \item[] Question: For each theoretical result, does the paper provide the full set of assumptions and a complete (and correct) proof?
    \item[] Answer: \answerYes{} 
    \item[] Justification: The paper does include theoretical grounds discussed in section Method. All proofs also presented in the Appendix.

    \item {\bf Experimental result reproducibility}
    \item[] Question: Does the paper fully disclose all the information needed to reproduce the main experimental results of the paper to the extent that it affects the main claims and/or conclusions of the paper (regardless of whether the code and data are provided or not)?
    \item[] Answer:  \answerYes{} 
    \item[] Justification: The paper provides explicit explanation on how the results could be reproduced altogether with models architecture and source code \url{https://anonymous.4open.science/r/sw-guidance-3E7D}.

\item {\bf Open access to data and code}
    \item[] Question: Does the paper provide open access to the data and code, with sufficient instructions to faithfully reproduce the main experimental results, as described in supplemental material?
    \item[] Answer: \answerYes{} 
    \item[] Justification: We describe all the data used for evaluating and provide a source code. We plan to publish all the data in the case of acceptance.

\item {\bf Experimental setting/details}
    \item[] Question: Does the paper specify all the training and test details (e.g., data splits, hyperparameters, how they were chosen, type of optimizer, etc.) necessary to understand the results?
    \item[] Answer: \answerYes{} 
    \item[] Justification: This information can be found in the source code and Experiments section.

\item {\bf Experiment statistical significance}
    \item[] Question: Does the paper report error bars suitably and correctly defined or other appropriate information about the statistical significance of the experiments?
    \item[] Answer: \answerYes{} 
    \item[] Justification: Error assessment is presented in comparison tables. 

\item {\bf Experiments compute resources}
    \item[] Question: For each experiment, does the paper provide sufficient information on the computer resources (type of compute workers, memory, time of execution) needed to reproduce the experiments?
    \item[] Answer: \answerYes{} 
    \item[] Justification: This information can be found in Experiment and Supplementary sections.
    
\item {\bf Code of ethics}
    \item[] Question: Does the research conducted in the paper conform, in every respect, with the NeurIPS Code of Ethics \url{https://neurips.cc/public/EthicsGuidelines}?
    \item[] Answer:\answerYes{}  
    \item[] Justification: We got acknowledged with the NeurIPS Code of Ethics and confirm that our research follows its guidelines.

\item {\bf Broader impacts}
    \item[] Question: Does the paper discuss both potential positive societal impacts and negative societal impacts of the work performed?
    \item[] Answer: \answerNA{} 
    \item[] Justification: The color style guidance generation is domain-specific and primarily intended for artistic and stylistic control in generative models. The method does not introduce new mechanisms for semantic manipulation, identity generation, or content creation that could be directly associated with misinformation or surveillance. As such, we do not anticipate any broader societal impacts, either positive or negative.

\item {\bf Safeguards}
    \item[] Question: Does the paper describe safeguards that have been put in place for responsible release of data or models that have a high risk for misuse (e.g., pretrained language models, image generators, or scraped datasets)?
    \item[] Answer: \answerNA{} 
    \item[] Justification: The paper poses no such risks.

\item {\bf Licenses for existing assets}
    \item[] Question: Are the creators or original owners of assets (e.g., code, data, models), used in the paper, properly credited and are the license and terms of use explicitly mentioned and properly respected?
    \item[] Answer: \answerYes{} 
    \item[] Justification:  All datasets included in paper are properly cited and link to them are included.

\item {\bf New assets}
    \item[] Question: Are new assets introduced in the paper well documented and is the documentation provided alongside the assets?
    \item[] Answer: \answerNA{} 
    \item[] Justification: We do not introduce any datasets. We use available datasets and credit their sources.

\item {\bf Crowdsourcing and research with human subjects}
    \item[] Question: For crowdsourcing experiments and research with human subjects, does the paper include the full text of instructions given to participants and screenshots, if applicable, as well as details about compensation (if any)? 
    \item[] Answer:  \answerNA{}  
    \item[] Justification: Crowdsourcing is not used in this study.

\item {\bf Institutional review board (IRB) approvals or equivalent for research with human subjects}
    \item[] Question: Does the paper describe potential risks incurred by study participants, whether such risks were disclosed to the subjects, and whether Institutional Review Board (IRB) approvals (or an equivalent approval/review based on the requirements of your country or institution) were obtained?
    \item[] Answer: \answerNA{} 
    \item[] Justification: The study does not involve human participants or subjects.

\item {\bf Declaration of LLM usage}
    \item[] Question: Does the paper describe the usage of LLMs if it is an important, original, or non-standard component of the core methods in this research? Note that if the LLM is used only for writing, editing, or formatting purposes and does not impact the core methodology, scientific rigorousness, or originality of the research, declaration is not required.
    \item[] Answer: \answerNA{} 
    \item[] Justification: The work does not use any LLM for methodology development or any original part.

\end{enumerate}

\appendix

\clearpage
\setcounter{page}{1}
\setcounter{theorem}{0}

\section{Sliced Wasserstein Distances}
\textbf{Sliced Wasserstein Distance~~} 
Wasserstein distances appear to be natural for our task of color transfer as they measure the cost of transporting one probability distribution to match another \cite{villani2009optimal}. 
The Wasserstein distance of order $p$ is
\begin{equation}
    \label{eq:wasserstein-p-appendix}
    \operatorname{W_p}(\pi_0, \pi_1) = \left( \inf_{\pi \in \Pi(\pi_0, \pi_1)} \int_{\mathcal{X}_0 \times \mathcal{X}_1}||x - y||^p \, d\pi(x, y) \right)^{1/p},
\end{equation}

where \(\Pi(\pi_0, \pi_1)\) represents the set of all joint distributions with marginals \(\pi_0\) and \(\pi_1\). However, directly computing \(\operatorname{W_p}(\pi_0, \pi_1)\) is computationally expensive and difficult to differentiate through, because its value is itself a result of an optimization procedure $\inf$ over all transport plans $\Pi(\pi_0, \pi_1)$.

To overcome this issue, the sliced Wasserstein (SW) distance was introduced \cite{rabin2012wasserstein}, offering a more computationally tractable alternative by reducing high-dimensional distributions to one-dimensional projections where the Wasserstein distance can be computed more straightforwardly. The sliced $p$-Wasserstein distance is defined as \cite{rabin2012wasserstein, bonneel2015sliced}:
\begin{equation}
    \begin{split}
        SW_{p}(\pi_0, \pi_1) = \bigg(\int_{\mathbb{S}^{d-1}} W_{p}^p(P_\theta \pi_0, P_\theta \pi_1) \, d\theta \bigg)^{1/p},
    \end{split}
\end{equation}
where \(\mathbb{S}^{d-1}\) is the unit sphere in \(\mathbb{R}^d\) with $\int_{\mathbb{S}^{d-1}} d\theta = 1$, \(P_\theta\) is a linear projection onto a one-dimensional subspace defined by $\theta$ ( Radon transformation in general) and $W_{p}^p$ is an ordinary p-Wasserstein distance by Eq.\ref{eq:wasserstein-p-appendix}.

A known issue with the Sliced Wasserstein (SW) distance arises when sampling parameters \(\theta\) for projections. As noted in \cite{kolouri2019generalized}, uniformly sampled \(\theta\) values on the unit sphere \(\mathbb{S}^{d-1}\) in high dimensions tend to be nearly orthogonal. This resulting in \(W_2(P_\theta \pi_0, P_\theta \pi_1) \approx 0\) with high probability. Consequently, these projections fail to provide discriminative information about the differences between the distributions \(\pi_0\) and \(\pi_1\).

\textbf{Distributional Sliced Wasserstein Distance~~} 
The Distributional Sliced Wasserstein (DSW) distance, proposed in \cite{nguyen2020distributional} generalizes the SW distance by introducing a probability distribution $\sigma(\theta)$ over the slicing directions and defined as:
\begin{equation}
    \begin{split}
        DSW_{p}(\pi_0, \pi_1) &= \\
        = \sup_{\sigma}&\bigg(\int_{\mathbb{S}^{d-1}} W_{p}^p(P_\theta \pi_0, P_\theta \pi_1) \, \sigma(\theta) d\theta \bigg)^{1/p},
    \end{split}
\end{equation}
where the optimization $\sup$ is performed w.r.t probability distributions $\sigma$ over unit sphere $\mathbb{S}^{d-1}$, with $\int_{\mathbb{S}^{d-1}} \sigma(\theta) d\theta = 1$.

\textbf{Energy-Based Sliced Wasserstein Distance ~~} 

The Energy-Based Sliced Wasserstein (EBSW) distance, introduced in \cite{nguyen2024energy}, provides an alternative to the optimization-based approach of DSW by defining a slicing distribution \(\sigma_{\pi_0, \pi_1}(\theta; f, p)\) based on the projected Wasserstein distances:

\begin{equation}
\sigma_{\pi_0, \pi_1}(\theta; f, p) \propto f(W_p^p(P_\theta \pi_0, P_\theta \pi_1)),
\end{equation}

where \(f\) is a monotonically increasing energy function (e.g., \(f(x) = e^x\)) that emphasizes directions with larger projected Wasserstein distances. Using this slicing distribution, the EBSW distance is defined as:

\begin{equation}
    \begin{split}
EBSW_p(\pi_0, \pi_1; f) =& \\
\mathbb{E}_{\theta \sim \sigma_{\pi_0, \pi_1}(\theta; f, p)} & \left[ W_p^p(P_\theta \pi_0, P_\theta \pi_1) \right]^{1/p}.
    \end{split}
\end{equation}

To improve computational efficiency, importance sampling is used, with a proposal distribution \(\sigma_0(\theta)\) to sample directions and weight them according to the ratio:

\begin{equation}
w_{\pi_0, \pi_1, \sigma_0, f, p}(\theta) = \frac{f(W_p^p(P_\theta \pi_0, P_\theta \pi_1))}{\sigma_0(\theta)}.
\end{equation}

\textbf{Generalized Sliced Wasserstein Distance~~} 
The Generalized Sliced Wasserstein (GSW) distance \cite{kolouri2019generalized} replaces the Radon transform with a generalized Radon transform that depends on a defining function \(g(x, \theta)\). Formally, for a function \(I\), the generalized Radon transform is defined as:

\begin{equation}
G I(t, \theta) = \int_{\mathbb{R}^d} I(x) \delta(t - g(x, \theta)) \, dx,
\end{equation}

where \(\delta\) is the Dirac delta function. Using the generalized Radon transform, the GSW distance between two distributions \(\pi_0\) and \(\pi_1\) is defined as:

\begin{equation}
GSW_p(\pi_0, \pi_1) = \left( \int_{\Omega_\theta} W_p^p(G I_{\pi_0}(\cdot, \theta), G I_{\pi_1}(\cdot, \theta)) \, d\theta \right)^{1/p},
\end{equation}

where \(\Omega_\theta\) is a compact set of feasible parameters for the function \(g(x, \theta)\) (e.g., \(\Omega_\theta = \mathbb{S}^{d-1}\) for \(g(x, \theta) = \langle x, \theta \rangle\)).

For empirical distributions \(\pi_0\) and \(\pi_1\), represented by samples \(\{x_i\}_{i=1}^N\) and \(\{y_j\}_{j=1}^N\), the GSW distance can be approximated as:

\begin{equation}
\begin{split}
   & GSW_p(\pi_0, \pi_1) \approx \\
   & \left( \frac{1}{L} \sum_{l=1}^L \sum_{n=1}^N \left| g(x_{i[n]}, \theta_l) - g(y_{j[n]}, \theta_l) \right|^p \right)^{1/p}, 
\end{split}
\end{equation}

where \(x_{i[n]}\) and \(y_{j[n]}\) denote the sorted indices of \(\{g(x_i, \theta_l)\}_{i=1}^N\) and \(\{g(y_j, \theta_l)\}_{j=1}^N\), respectively, for each sampled \(\theta_l\).

\section{Theoretical Justification}

This section contains proofs of Proposition \ref{lemma:1} and Lemma \ref{lemma:3} from the main text (here they are numbered as Proposition \ref{proposition:4} and Lemma \ref{lemma:5}). Though the statement of Proposition \ref{proposition:4} can be found in the literature, its formal treatment is omitted \cite{villani2009optimal, villani2021topics}.
Here we provide its detailed proof for Borel probability measures on $\mathbb{R}$. It restricts us to non-decreasing, right-continuous cumulative distribution functions $F$, Fig \ref{fig:right_monotone}.

\begin{figure}[h]
    \centering
    \includegraphics[width=0.7\linewidth]{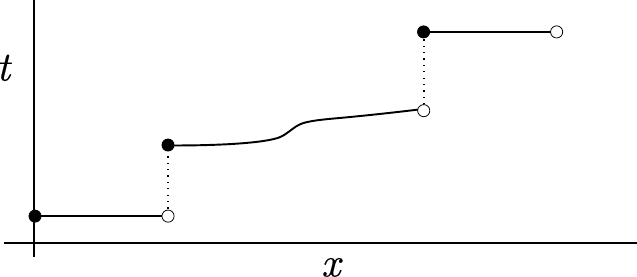}
    \caption{Example of right continuous non-decreasing function.}
    \label{fig:right_monotone}
\end{figure}

We need Proposition \ref{proposition:4} for efficient sampling, as it allows one to avoid computing the inverse CDF. First we prove Lemmas \ref{lemma:quantile-preimage}, \ref{lemma:quantile-measurable} and  \ref{lemma:layer-cake-representation}.

\begin{lemma}
    \label{lemma:quantile-preimage}
    Let $ F $ be a cumulative distribution function (CDF) on $ \mathbb{R} $, and let $ F^{-1}(t) = \inf \{ x \in \mathbb{R} \mid F(x) \geq t \} $ be its quantile function for $ t \in [0,1] $. Then:
    \begin{equation}
    \label{eq:sets_equiv}
        \{ t \in [0,1] \mid F^{-1}(t) \leq a \} = \{ t \in [0,1] \mid F(a) \geq t \}.
    \end{equation}
\end{lemma}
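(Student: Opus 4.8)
The plan is to reduce the set identity in \eqref{eq:sets_equiv} to the single real-variable equivalence
\begin{equation}
F^{-1}(t) \leq a \iff F(a) \geq t,
\end{equation}
holding for every fixed $a \in \mathbb{R}$ and $t \in [0,1]$. Once this is shown, the two sides of \eqref{eq:sets_equiv} describe exactly the same subset of $[0,1]$, and the lemma follows at once. This equivalence is the familiar ``Galois'' relation between a non-decreasing function and its generalized inverse, so the work is entirely in justifying it carefully for right-continuous CDFs.

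The key structural observation is that, since $F$ is non-decreasing, the super-level set $A_t := \{x \in \mathbb{R} \mid F(x) \geq t\}$ is upward closed: if $x \in A_t$ and $y \geq x$, then $F(y) \geq F(x) \geq t$, so $y \in A_t$. Hence $A_t$ is a half-line unbounded above, and by definition $\inf A_t = F^{-1}(t)$. First I would show that this infimum is attained whenever $A_t \neq \emptyset$, so that in fact $A_t = [F^{-1}(t), \infty)$. This is precisely where right-continuity of $F$ enters: choosing $x_n \in A_t$ with $x_n \downarrow F^{-1}(t)$, right-continuity gives $F(F^{-1}(t)) = \lim_n F(x_n) \geq t$, so $F^{-1}(t) \in A_t$.

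With the description $A_t = [F^{-1}(t), \infty)$ in hand, both implications are immediate. For ($\Leftarrow$), if $F(a) \geq t$ then $a \in A_t$, and since $F^{-1}(t) = \inf A_t$ we conclude $F^{-1}(t) \leq a$; note this direction uses only the definition of the infimum, not right-continuity. For ($\Rightarrow$), if $F^{-1}(t) \leq a$ then $a$ lies in the closed half-line $[F^{-1}(t), \infty) = A_t$, whence $F(a) \geq t$.

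The main obstacle --- really the only subtle point --- is the attainment of the infimum in the super-level set, which is exactly what forces the use of right-continuity rather than mere monotonicity of $F$. I would also dispose of the degenerate cases explicitly: when $t = 0$ one has $A_t = \mathbb{R}$ and $F^{-1}(0) = -\infty$, so both conditions hold for every $a$; and when $A_t = \emptyset$ (i.e.\ $t$ exceeds $\sup_x F(x)$) the convention $F^{-1}(t) = +\infty$ makes $F^{-1}(t) \leq a$ fail while $F(a) \geq t$ also fails, so the two sets remain equal (both contain no such $t$). These boundary checks are routine but should be stated to make the equivalence airtight.
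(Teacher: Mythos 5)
Your proof is correct and takes essentially the same route as the paper's: both arguments rest on right-continuity forcing the infimum of the super-level set $\{x \in \mathbb{R} \mid F(x) \geq t\}$ to be attained, and on monotonicity making that set upward closed, which together yield the pointwise equivalence $F^{-1}(t) \leq a \iff F(a) \geq t$. Your packaging of this as $A_t = [F^{-1}(t), \infty)$, together with the explicit handling of the degenerate cases $t = 0$ and $A_t = \emptyset$ (which the paper glosses over), is slightly tidier than the paper's two-implication argument (the paper proves the reverse inclusion by contradiction), but the mathematical content is identical.
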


\begin{proof}
L.H.S. $ \Rightarrow $ R.H.S.:

Suppose $ t' \in \{ t \in [0,1] \mid F^{-1}(t) \leq a \} $. Then, there exists $ x' = F^{-1}(t') $ such that $ x' \leq a $. By the definition of the quantile function $ F^{-1}(t') $, $ x' $ is the infimum of the set $ \{ x \mid t' \leq F(x) \} $. Under the assumptions that $F$ is right-continuous, the infimum $ x' $ belongs to the set, and therefore $ F(x') \geq t' $. Since $ F(x) $ is non-decreasing and $a \geq x'$, it follows that $ F(a) \geq F(x') \geq t'$. Hence, $ t' \in \{ t \in [0,1] \mid F(a) \geq t \} $. 

R.H.S. $ \Rightarrow $ L.H.S.:

Suppose $ t' \in \{ t \in [0,1] \mid F(a) \geq t \} $, but $ t' \notin \{ t \in [0,1] \mid F^{-1}(t) \leq a \} $, i.e. $t'$ such that $ t' \leq F(a) $ and $ F^{-1}(t') > a $. However, by the definition of $ x' = F^{-1}(\hat{t}) $, $ x' $ is the infimum of the set $ \{ x \mid F(x) \geq t' \} $. Since $ a < x' $, $ a $ cannot belong to this set, implying $ F(a) < t' $, which contradicts the assumption $ t' \leq F(a) $. Thus, there is no $ t $ in the R.H.S. that does not also belong to the L.H.S.

From these, we conclude that the two sets are equal:
\begin{equation}
\{ t \in [0,1] \mid F^{-1}(t) \leq a \} = \{ t \in [0,1] \mid F(a) \geq t \}.
\end{equation}

\end{proof}

\begin{lemma}
    \label{lemma:quantile-measurable}
    Let $ F $ be a cumulative distribution function (CDF) on $ \mathbb{R} $. Then the quantile function $ F^{-1}(t) = \inf \{ x \in \mathbb{R} \mid F(x) \geq t \} $, defined for $ t \in [0,1] $, is measurable with respect to the Borel sigma algebra.
\end{lemma}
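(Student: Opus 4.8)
The plan is to verify measurability through the standard sublevel-set criterion: a function $F^{-1}\colon [0,1] \to \mathbb{R}$ is Borel measurable precisely when, for every $a \in \mathbb{R}$, the preimage $\{\, t \in [0,1] \mid F^{-1}(t) \leq a \,\}$ is a Borel subset of $[0,1]$. This suffices because the half-lines $(-\infty, a]$ generate the Borel $\sigma$-algebra on $\mathbb{R}$, so checking preimages of this generating collection is enough to conclude measurability of the whole function.

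The key observation is that these sublevel sets have already been identified explicitly in Lemma \ref{lemma:quantile-preimage}. Applying that lemma, I would rewrite
\[
    \{\, t \in [0,1] \mid F^{-1}(t) \leq a \,\}
    = \{\, t \in [0,1] \mid t \leq F(a) \,\}
    = [0, F(a)],
\]
where the last equality uses that $F(a) \in [0,1]$ for any CDF, so the set $\{\, t \in [0,1] \mid t \leq F(a) \,\}$ is exactly the interval $[0, F(a)]$. Every interval is a Borel set, and since $a$ was arbitrary, all sublevel sets of $F^{-1}$ are Borel. Hence $F^{-1}$ is Borel measurable.

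I do not expect any genuine obstacle here: the substantive work has been front-loaded into Lemma \ref{lemma:quantile-preimage}, so this statement is essentially a corollary of it. The only point requiring a word of care is the behavior of $F^{-1}$ at the endpoints, since $F^{-1}(0)$ may equal $-\infty$ and $F^{-1}(1)$ may equal $+\infty$ for distributions with unbounded support. Strictly speaking one should then regard $F^{-1}$ as a map into the extended reals $\overline{\mathbb{R}}$ and use the Borel $\sigma$-algebra there; the sublevel-set formula above remains valid verbatim in that setting, so the conclusion is unaffected. (An alternative self-contained route is simply to note that $F^{-1}$ is non-decreasing and that every monotone function is Borel measurable, but the sublevel-set argument is cleaner because it reuses the preceding lemma directly.)
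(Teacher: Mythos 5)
Your proof is correct and follows essentially the same route as the paper's: both reduce measurability to preimages of the generating half-lines $(-\infty, a]$, invoke Lemma \ref{lemma:quantile-preimage} to identify the sublevel set as the interval $[0, F(a)]$, and conclude. Your added remark about extended-real values at $t=0,1$ is a welcome refinement the paper omits, but it does not change the argument.
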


\begin{proof}

To show that $ F^{-1}(t) \colon ([0,1], \mathcal{B}_{[0,1]}) \to (\mathbb{R}, \mathcal{B}_{\mathbb{R}}) $ is measurable, we must prove that for any Borel set $ B \subset \mathbb{R} $, the preimage:
\begin{equation}
\{ t \in [0,1] \mid F^{-1}(t) \in B \} \in \mathcal{B}_{[0,1]}.
\end{equation}

The Borel sigma algebra $ \mathcal{B}_{\mathbb{R}} $ is generated by intervals of the form $ (-\infty, b] $. Hence, it suffices to prove that for any $ b \in \mathbb{R} $, the set 
\begin{equation}
    \{ t \in [0,1] \mid F^{-1}(t) \in (-\infty, b] \}
\end{equation}
is measurable.

Consider the preimages of $(-\infty, b] $:
\begin{equation}
\begin{split}
    \{ t \in [0,1] \mid F^{-1}(t) \in (-\infty, b] \} & = \\
    = \{ t \in [0,1] \mid F^{-1}(t) \leq b \}& = \text{/by Lemma \ref{lemma:quantile-preimage}/}\\
    = \{ t \in [0,1] \mid F(b) \geq t \} = & [0, F(b)]
\end{split}
\end{equation}

 Since $ F $ is a CDF, $ F(b) $ is a real number in $ [0,1] $, and the set $ \{ t \in [0,1] \mid t \leq F(b) \} = [0, F(b)] $ is a Borel set in $ [0,1] $, and thus the preimage of $ (-\infty, b] $ is a measurable set in $ [0,1] $. 
Therefore the quantile function $ F^{-1}(t) $ is measurable.
\end{proof}

\begin{lemma}
    \label{lemma:layer-cake-representation}
    Let $ a $ and $ b $ be two real numbers. Then: 
    \begin{equation}
        |a - b| = \int_{\mathbb{R}} \left| I_{a \geq u} - I_{b \geq u} \right| \, du,
    \end{equation}
    where $ I_{x \geq u} $ is the indicator of the set $ \{x\in \mathbb{R}| x \geq u \} $.
\end{lemma}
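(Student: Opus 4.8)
The plan is to reduce everything to a direct computation of the integrand, exploiting the fact that each indicator $I_{a \geq u}$ equals $1$ exactly on the half-line $\{u \mid u \leq a\}$. First I would observe that both sides of the claimed identity are symmetric under swapping $a$ and $b$: the left-hand side because $|a-b| = |b-a|$, and the right-hand side because interchanging $a$ and $b$ only flips the sign of the quantity inside the absolute value, leaving $\left| I_{a \geq u} - I_{b \geq u} \right|$ unchanged. This symmetry lets me assume without loss of generality that $a \leq b$.

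Under this assumption I would analyze the integrand $\left| I_{a \geq u} - I_{b \geq u} \right|$ on three regions of $u$. For $u \leq a$ both indicators equal $1$, so their difference vanishes; for $u > b$ both equal $0$, so again the difference vanishes; and for $a < u \leq b$ we have $I_{a \geq u} = 0$ while $I_{b \geq u} = 1$, giving an integrand equal to $1$. Hence the integrand coincides with the indicator of the interval $(a, b]$, and integrating yields $\int_{\mathbb{R}} \left| I_{a \geq u} - I_{b \geq u} \right| \, du = b - a = |a - b|$, which is the desired equality.

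The only point requiring a word of care is the behaviour at the two endpoints $u = a$ and $u = b$: depending on the exact convention for the indicators at the boundary, the integrand there might be $0$ or $1$. Since these are single points of Lebesgue measure zero, however, the choice does not affect the value of the integral, so no separate treatment is needed. I do not anticipate any genuine obstacle: this is the classical layer-cake identity specialised to a pair of points, and the proof amounts to a finite case check followed by computing the length of an interval. The main role of this lemma is as a building block, since it will later let me represent $|F^{-1}(t) - G^{-1}(t)|$ as an integral of indicators and then exchange the order of integration in the proof of Proposition~\ref{proposition:4}.
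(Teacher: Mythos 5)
Your proof is correct and follows essentially the same approach as the paper: a case analysis showing the integrand equals the indicator of the interval between $a$ and $b$, whose length is $|a-b|$. The only cosmetic differences are that you invoke symmetry to assume $a \leq b$ (the paper treats both orderings explicitly) and that you explicitly note the measure-zero endpoint issue, which the paper sidesteps by using strict inequalities.
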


\begin{proof}
    First, suppose $ a \geq b $. Consider three cases for $ u $:
    \begin{enumerate}
        \item If $ u > a > b $, then $ I_{a \geq u} = 0 $ and $ I_{b \geq u} = 0 $, so\\ $ |I_{a \geq u} - I_{b \geq u}| = 0 $.
        \item If $ a > b > u $, then $ I_{a \geq u} = 1 $ and $ I_{b \geq u} = 1 $, so \\$ |I_{a \geq u} - I_{b \geq u}| = 0 $.
        \item If $ a > u > b $, then $ I_{a \geq u} = 1 $ and $ I_{b \geq u} = 0 $, so \\$ |I_{a \geq u} - I_{b \geq u}| = 1 $.
    \end{enumerate}
    Therefore, the integral reduces to:
    \begin{equation}
    \int_{\mathbb{R}} \left| I_{a \geq u} - I_{b \geq u} \right| \, du = \int_b^a 1 \, du = a - b.
    \end{equation}
    For the case $ b > a $, by a similar argument, integral is not zero only when:
    \begin{equation*}
         b \geq u \geq a \quad |I_{a \geq u} - I_{b \geq u}| = 1.
    \end{equation*}
    and therefore, the integral reduces to
    \begin{equation}
    \int_{\mathbb{R}} \left| I_{a \geq u} - I_{b \geq u} \right| \, du = \int_a^b 1 \, du = b - a.
    \end{equation}
    Thus, in all cases:
    \begin{equation}
    |a - b| = \int_{\mathbb{R}} \left| I_{a \geq u} - I_{b \geq u} \right| \, du.
    \end{equation}
\end{proof}

\begin{proposition}
    \label{proposition:4}
    Let $ F $ and $ G $ be cumulative distribution functions (CDFs) on $ \mathbb{R} $. Then:
    \begin{equation}
        \int_0^1 \left| F^{-1}(t) - G^{-1}(t) \right| \, dt = \int_{\mathbb{R}} \left| F(x) - G(x) \right| \, dx,
    \end{equation}
    where $ F^{-1} $ and $ G^{-1} $ are the quantile functions (generalized inverse CDFs) of $ F $ and $ G $, respectively.
\end{proposition}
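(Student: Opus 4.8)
The plan is to reduce the left-hand side to a double integral over $(t,u) \in [0,1]\times\mathbb{R}$ via the layer-cake identity and then evaluate the cross-sections using the quantile-preimage lemma. First I would apply Lemma~\ref{lemma:layer-cake-representation} pointwise in $t$, with $a = F^{-1}(t)$ and $b = G^{-1}(t)$, to rewrite the integrand of the left-hand side as
\begin{equation*}
\left| F^{-1}(t) - G^{-1}(t) \right| = \int_{\mathbb{R}} \left| I_{F^{-1}(t) \ge u} - I_{G^{-1}(t) \ge u} \right| du.
\end{equation*}
Integrating over $t \in [0,1]$ and swapping the order of integration by Tonelli's theorem — legitimate because the integrand is nonnegative and, by the measurability of $F^{-1}$ and $G^{-1}$ established in Lemma~\ref{lemma:quantile-measurable}, jointly measurable on $[0,1] \times \mathbb{R}$ — turns the left-hand side into $\int_{\mathbb{R}} \left( \int_0^1 \left| I_{F^{-1}(t)\ge u} - I_{G^{-1}(t)\ge u} \right| dt \right) du$.

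The core of the argument is the evaluation of the inner integral for each fixed $u$. Denoting Lebesgue measure by $m$ and writing $A_F = \{ t \in [0,1] : F^{-1}(t) \ge u \}$ and $A_G$ analogously, the inner integrand is the indicator of the symmetric difference $A_F \triangle A_G$, so the inner integral equals $m(A_F \triangle A_G)$. Here I would exploit that quantile functions are non-decreasing: this forces each $A_F$ to be upward-closed in $[0,1]$, hence an interval of the form $(s_F, 1]$ or $[s_F, 1]$. Two such upper intervals are nested, so $A_F \triangle A_G$ is itself an interval and $m(A_F \triangle A_G) = \left| m(A_F) - m(A_G) \right|$.

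It remains to identify $m(A_F)$. By Lemma~\ref{lemma:quantile-preimage} with $a = u$ we have $\{ t : F^{-1}(t) \le u \} = [0, F(u)]$, of measure $F(u)$; passing to the complement gives $m(A_F) = 1 - F(u)$ for every $u$ outside the at most countable set of jumps of $F$ (where a left-limit correction $F(u^-)$ would appear, but which is Lebesgue-null and hence irrelevant under $\int du$). Consequently $\left| m(A_F) - m(A_G) \right| = \left| F(u) - G(u) \right|$ for almost every $u$, and substituting back yields $\int_{\mathbb{R}} |F(u)-G(u)|\,du$, which is the right-hand side after renaming $u$ to $x$.

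I expect the main obstacle to be the bookkeeping in the two middle steps: justifying the interchange of integrals cleanly (which is exactly why Lemma~\ref{lemma:quantile-measurable} is needed) and, more delicately, the claim that the symmetric difference of the two level sets has measure equal to the absolute difference of their measures. The latter hinges entirely on the monotonicity of the quantile functions making the level sets nested intervals; without that observation the symmetric difference need not simplify. The endpoint discrepancies between $\ge$ and $>$ and between $F(u)$ and $F(u^-)$ are harmless, since they concern only countably many $u$, but they must be acknowledged to keep the almost-everywhere equality rigorous.
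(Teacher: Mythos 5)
Your proof is correct, and it shares the paper's skeleton: both arguments open with the layer-cake identity of Lemma~\ref{lemma:layer-cake-representation} applied to $\left| F^{-1}(t) - G^{-1}(t) \right|$, invoke the measurability of quantile functions (Lemma~\ref{lemma:quantile-measurable}) to justify Tonelli/Fubini, and rely on the preimage identity of Lemma~\ref{lemma:quantile-preimage}. Where you genuinely diverge is in how the cross-sections are evaluated. The paper never reasons about level sets geometrically: before swapping the integrals it flips indicators by complementation, $I_{F^{-1}(t)\ge u} = 1 - I_{F^{-1}(t)<u}$, replaces $<$ by $\le$ (harmless, since for fixed $t$ this changes the $u$-integrand at a single point), converts $I_{F^{-1}(t)\le u}$ into $I_{t\le F(u)}$ via Lemma~\ref{lemma:quantile-preimage}, and then applies the layer-cake identity a \emph{second} time to obtain $\int_0^1 \left| I_{t\le F(u)} - I_{t\le G(u)} \right| dt = \left| F(u)-G(u) \right|$ exactly, for every $u$. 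You instead identify the inner integral as $m(A_F \triangle A_G)$ and use the monotonicity of $F^{-1}$ — a fact the paper never needs — to get nestedness of the upper level sets, hence $m(A_F \triangle A_G) = \left| m(A_F) - m(A_G) \right|$, finishing with a countability argument over the jump set to handle the discrepancy between $\{F^{-1}\ge u\}$ and $\{F^{-1}>u\}$. Both treatments are rigorous. The paper's indicator algebra is leaner: its only almost-everywhere correction is on a null graph in the $(t,u)$-plane, and its cross-section identity holds for all $u$ with no exceptional set. Your route costs the extra bookkeeping you flagged (nestedness, jumps, left limits) but is more geometric — it makes visible that the proposition is the statement that the area between the graphs of $F$ and $G$ equals the area between the graphs of their reflections $F^{-1}$ and $G^{-1}$ — and it isolates exactly which structural property of quantile functions (monotonicity of $F^{-1}$, hence nested level sets) drives the simplification.
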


\begin{proof}
    Note, that by Lemma~\ref{lemma:quantile-measurable} both $F^{-1}$ and $G^{-1}$ are measurable and therefore the L.H.S exists.
    By Lemma~\ref{lemma:layer-cake-representation}, its absolute value can be represented as:
    \begin{equation}
    \begin{split}
        &\int_0^1 \left| F^{-1}(t) - G^{-1}(t) \right| \, dt = \\
        &= \int_0^1 \int_{\mathbb{R}}  \left| I_{F^{-1}(t) \geq u}  - I_{G^{-1}(t) \geq u}  \right| \, du \, dt.
    \end{split}
    \end{equation}
    Using the property of indicator functions $ I_{F^{-1}(t) \geq u} = 1 - I_{F^{-1}(t) < u} $, the integral becomes:
    \begin{equation}
    \begin{split}
        &\int_0^1 \int_{\mathbb{R}} \left| I_{F^{-1}(t) \geq u} - I_{G^{-1}(t) \geq u} \right| \, du \, dt \\
        &= \int_0^1 \int_{\mathbb{R}} \left| -I_{F^{-1}(t) < u} + I_{G^{-1}(t) < u} \right| \, du \, dt \\
        &= \int_0^1 \int_{\mathbb{R}} \left| -I_{F^{-1}(t) \leq u} + I_{G^{-1}(t) \leq u} \right| \, du \, dt.
    \end{split}
    \end{equation}
    where the last equality is correct since function under the Lebesgue integral can be changed on a set of measure zero.
    Using Lemma~\ref{lemma:quantile-preimage} we rewrite indicators:
    \begin{equation}
        \int_0^1 \int_{\mathbb{R}} \left| -I_{t \leq F(u)} + I_{t \leq G(u)} \right| \, du \, dt
    \end{equation}
    By Fubini’s theorem (justified as the integrand is non-negative and measurable), we can switch the order of integration:
    \begin{equation}
        \int_{\mathbb{R}} \int_0^1 \left| -I_{t \leq F(u)} + I_{t \leq G(u)} \right| \, dt \, du.
    \end{equation}
    Using the Lemma~\ref{lemma:layer-cake-representation} again we get:
    \begin{equation}
    \begin{split}
        &\int_{\mathbb{R}} \int_0^1 \left| -I_{t \leq F(u)} + I_{t \leq G(u)} \right| \, dt \, du \\ 
        &= \int_{\mathbb{R}} \left| G(u) - F(u) \right| \, du.
    \end{split}
    \end{equation}
    Hence, we conclude:
    \begin{equation}
    \int_0^1 \left| F^{-1}(t) - G^{-1}(t) \right| \, dt = \int_{\mathbb{R}} \left| F(x) - G(x) \right| \, dx.
    \end{equation}
\end{proof}

Lemma~\ref{lemma:5} (Lemma \ref{lemma:3} in the main text) provides the theoretical foundation for our optimization procedure for multidimensional Borel probability measures \( \mu_n \) and \( \mu \) on \( \mathbb{R}^d \).

\begin{lemma}
    \label{lemma:5}
    Let $ \mu_n $ and $ \mu $ be Borel probability measures on the unit cube $[0, 1]^d \subset \mathbb{R}^d $. If
    \begin{equation}
         \lim_{n \to \infty} \operatorname{SW}\left(\mu_n, \mu\right) = 0,
    \end{equation}
    then $ \mu_n $ converges weakly to $ \mu $, and all moments of $ \mu_n $ converge to the moments of $ \mu $.
\end{lemma}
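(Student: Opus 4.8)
The plan is to reduce weak convergence on $[0,1]^d$ to the one-dimensional fact that $W_1$ metrizes weak convergence, and then to recover the full measure from its projections by a Cram\'er--Wold argument. First I would unwind the hypothesis: by the definition of the sliced distance, $\operatorname{SW}(\mu_n,\mu)\to 0$ means $\int_{\mathbb{S}^{d-1}} W_1(P_\theta\mu_n, P_\theta\mu)\,d\theta \to 0$, so the nonnegative integrands converge to $0$ in $L^1(\mathbb{S}^{d-1})$. Hence some subsequence satisfies $W_1(P_\theta\mu_{n_k},P_\theta\mu)\to 0$ for almost every direction $\theta$. Since the projections of measures on $[0,1]^d$ are supported on a fixed bounded interval, one-dimensional $W_1$ convergence is equivalent to weak convergence, so $P_\theta\mu_{n_k}\to P_\theta\mu$ weakly for a.e. $\theta$.

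Next I would exploit that all $\mu_n$ are supported in the fixed compact set $[0,1]^d$. The family $\{\mu_n\}$ is then automatically tight, so by Prokhorov's theorem it is relatively compact in the weak topology, and it suffices to show that every weak limit point equals $\mu$; the convergence of the whole sequence then follows by a standard subsequence argument. Let $\nu$ be the weak limit of a subsequence $\mu_m$. Because the map $x\mapsto\langle\theta,x\rangle$ is continuous, pushforward preserves weak convergence, so $P_\theta\mu_m\to P_\theta\nu$ weakly for every $\theta$. Organizing the extractions so both conclusions refer to a common further subsequence, I would combine this with the previous paragraph to obtain $P_\theta\nu = P_\theta\mu$ for almost every $\theta$.

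To upgrade ``a.e. $\theta$'' to ``$\nu=\mu$'' I would pass to characteristic functions. Equality $P_\theta\nu=P_\theta\mu$ of one-dimensional laws gives $\hat\nu(t\theta)=\hat\mu(t\theta)$ for all $t\in\mathbb{R}$ and almost every $\theta$. For each fixed $t$, both sides are continuous in $\theta$ and agree on a full-measure, hence dense, subset of the sphere, so they agree everywhere; letting $t$ and $\theta$ range recovers $\hat\nu\equiv\hat\mu$ on all of $\mathbb{R}^d$, whence $\nu=\mu$ by uniqueness of characteristic functions. This establishes $\mu_n\to\mu$ weakly. Finally, since every monomial $x^\alpha$ is bounded and continuous on the compact support $[0,1]^d$, weak convergence yields $\int x^\alpha\,d\mu_n\to\int x^\alpha\,d\mu$, i.e. convergence of all moments.

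The main obstacle I anticipate is the passage from a.e. equality of projections to equality of the measures. The subsequence extraction must be organized so that the two descriptions of the limiting projections, one coming from weak convergence of $\mu_m$ and one from the $L^1$/a.e. consequence of SW convergence, refer to the same subsequence; and the ``a.e. to everywhere'' step genuinely relies on the continuity of $\theta\mapsto\hat\mu(t\theta)$ rather than on any manipulation of the measures directly.
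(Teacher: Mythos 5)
Your proof is correct, but it takes a genuinely different route from the one in the paper. The paper's argument is quantitative: it extends the measures from the cube to a ball $B(0,\sqrt{d})$ and invokes Bonnotte's comparison inequality (Lemma 5.1.4 of the cited thesis), $\operatorname{W}_1(\mu,\nu) \leq C_d\, R^{d/(d+1)} \operatorname{SW}_1(\mu,\nu)^{1/(d+1)}$, so that $\operatorname{SW}_1(\mu_n,\mu)\to 0$ forces $\operatorname{W}_1(\mu_n,\mu)\to 0$ by the squeeze theorem; weak convergence then follows from the fact that $\operatorname{W}_1$ metrizes weak convergence (Villani, Theorem 6.9), and moments converge because monomials are bounded and continuous on the cube. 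You instead run a soft compactness argument: $L^1(\mathbb{S}^{d-1})$ convergence of the projected distances gives a.e.\ convergence along a subsequence, Prokhorov's theorem (trivially applicable since all measures live on a fixed compact set) provides weak limit points, and a Cram\'er--Wold argument through characteristic functions --- using that $\theta\mapsto\hat\mu(t\theta)$ is continuous and that a full-measure subset of the sphere is dense --- identifies every limit point with $\mu$; the whole-sequence convergence then follows from metrizability of the weak topology. Your subsequence bookkeeping (extract the weakly convergent subsequence first, then the a.e.\ subsequence within it) is exactly right, and the final moment step coincides with the paper's. What each approach buys: the paper's proof is shorter and yields an explicit rate, $\operatorname{W}_1 \lesssim \operatorname{SW}_1^{1/(d+1)}$, but hinges on a nontrivial external inequality; yours is purely qualitative (no rate) yet self-contained modulo textbook facts, and it generalizes readily to any setting where projections determine the measure, making transparent \emph{why} sliced convergence suffices --- projections onto a.e.\ direction pin down the characteristic function everywhere.
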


\begin{proof}
    Consider the ball $B(0,R)$ of radius $R$, that contains the unit cube. Then a Borel probability measure on the cube $[0, 1]^d$ can be extended to the Borel probability measure on $B(0,R)$ by assigning measure zero to any Borel set outside of the cube.
    
    Now we can use Lemma 5.1.4 from \cite{bonnotte2013unidimensional}, which states that for the 1-Wasserstein distance $W_1$ there exists a constant $ C_d > 0 $ such that for all Borel probability measures $\mu, \nu$ on $B(0,R)$
    \begin{equation}
    \label{eq:squeeze}
        0 \leq \operatorname{W}_1(\mu, \nu) \leq C_d \ R^{\frac{d}{d+1}} \operatorname{SW}_1(\mu, \nu)^{\frac{1}{d+1}}.
    \end{equation}

    Since $ \mu_n $ and $ \mu $ are supported on the unit cube in $ \mathbb{R}^d $, we take $ R = \sqrt{d} $, which is a sufficient radius to bound the unit cube. From the assumption that $ \lim_{n \to \infty} \operatorname{SW}_1(\mu_n, \mu) = 0 $, we have:
    \begin{equation}
        \lim_{n \to \infty} C_d \ R^{\frac{d}{d+1}} \operatorname{SW}_1(\mu_n, \mu)^{\frac{1}{d+1}} = 0.
    \end{equation}
    Using the squeeze Theorem for \eqref{eq:squeeze}, it follows that:
    \begin{equation}
        \lim_{n \to \infty} \operatorname{W}_1(\mu_n, \mu) = 0.
    \end{equation}

    By Definition 6.8 (iv) and Theorem 6.9 of \cite{villani2009optimal}, the convergence $ \operatorname{W}_1(\mu_n, \mu) \to 0 $ implies that $ \mu_n $ converges weakly to $\mu$. Specifically, for any $x_0\in B(0,R)$ and all continuous functions $\varphi$ with $|\varphi| \leq C \ (1 + d(x_0, x)), \ C \in \mathbb{R}$ one has
    \begin{equation}
        \lim_{n \to \infty} \int \varphi(x) \, d\mu_n(x) = \int \varphi(x) \, d\mu(x).
    \end{equation}

    For our case $d(x_0, x) \leq 2 R$, so $\varphi$ is bounded, and integration over the $B(0,R)$ could be replaced with integration over the unit cube by a construction of our extension of $\mu_n$ and $\mu$.
       

    Given a (finite) multi-index $ \bar{\alpha} = (\alpha_1, \alpha_2, \dots, \alpha_d) $, one can define the moment:
    \begin{equation} 
        m_{\bar{\alpha}} = \int x_1^{\alpha_1} x_2^{\alpha_2} \cdots x_d^{\alpha_d} \, d\mu(x).
    \end{equation}
    Polynomial functions $ \phi(x) = x^{\bar{\alpha}} $ are bounded and continuous on the unit cube because $ x_i \leq 1 $ for all $ i \in \{1, \dots, d \}$, ensuring all terms $ x^{\bar{\alpha}}\leq1 $. Thus, weak convergence implies that for all multi-indices $ \bar{\alpha} $,
    \begin{equation}
        \lim_{n \to \infty} \int x^{\bar{\alpha}} \, d\mu_n(x) = \int x^{\bar{\alpha}} \, d\mu(x),
    \end{equation}
    i.e., all moments of $ \mu_n $ converge to the corresponding moments of $ \mu $ component-wise.
\end{proof}

\begin{figure}[t]
\centering
    \includegraphics[width=0.85\columnwidth]{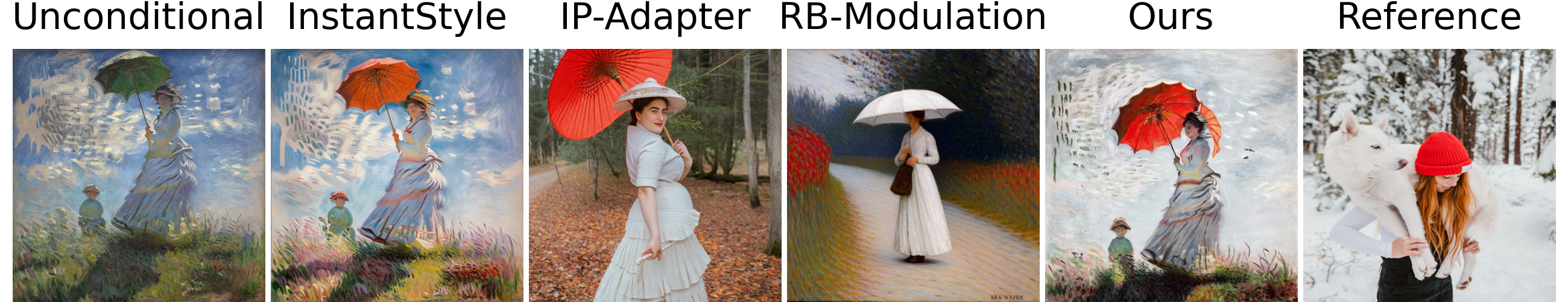} 
    \includegraphics[width=0.85\columnwidth]{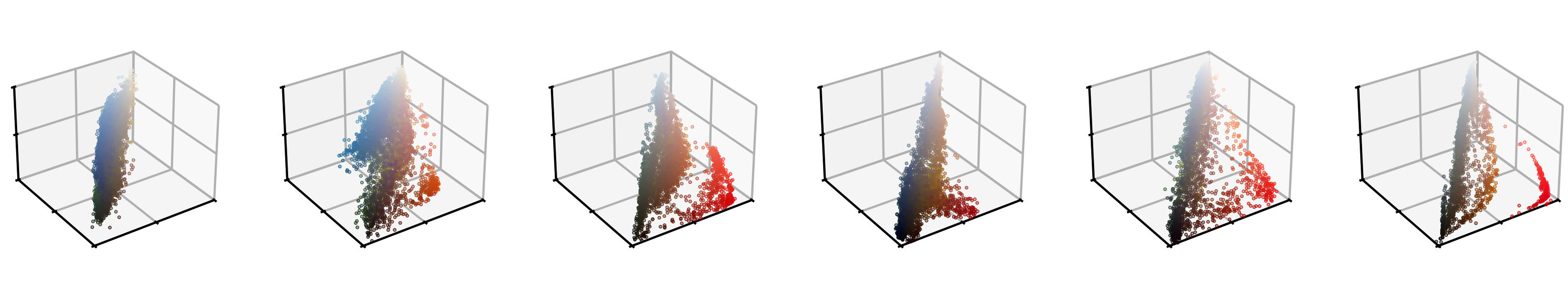}
    \includegraphics[width=0.85\columnwidth]{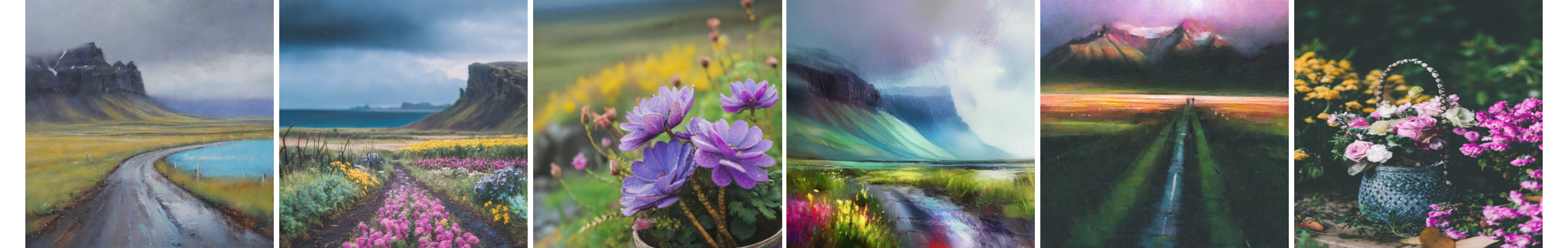} 
    \includegraphics[width=0.85\columnwidth]{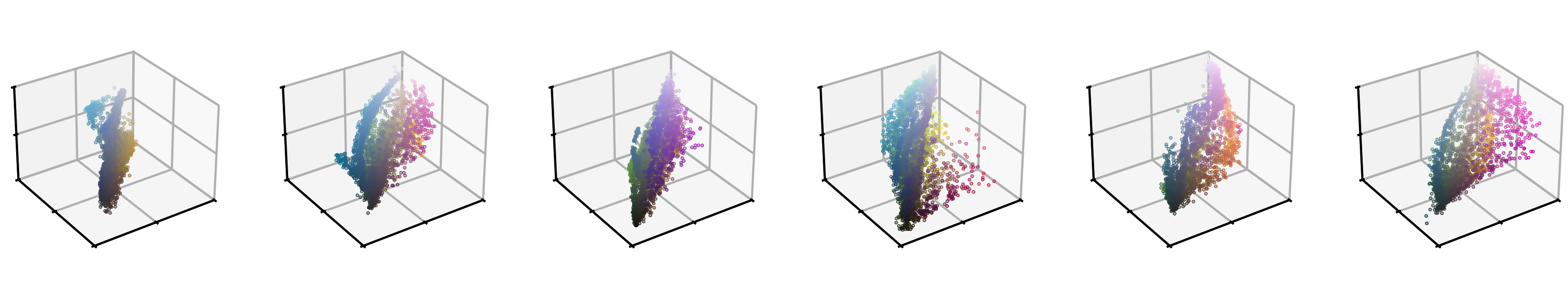}
    \includegraphics[width=0.85\columnwidth]{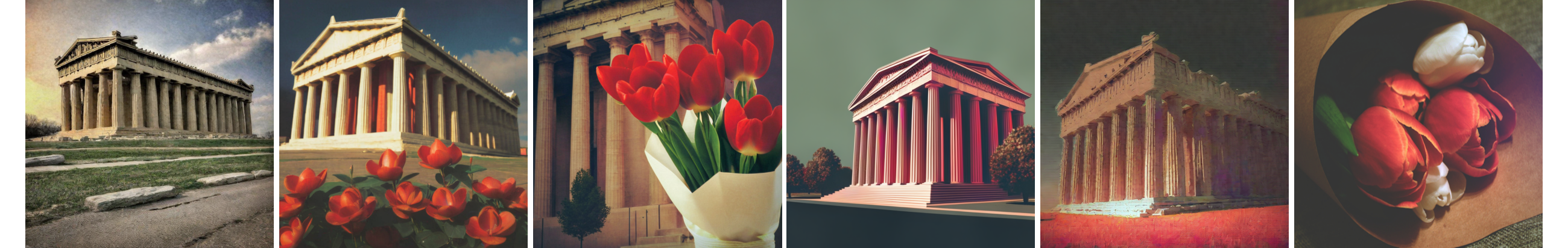} 
    \includegraphics[width=0.85\columnwidth]{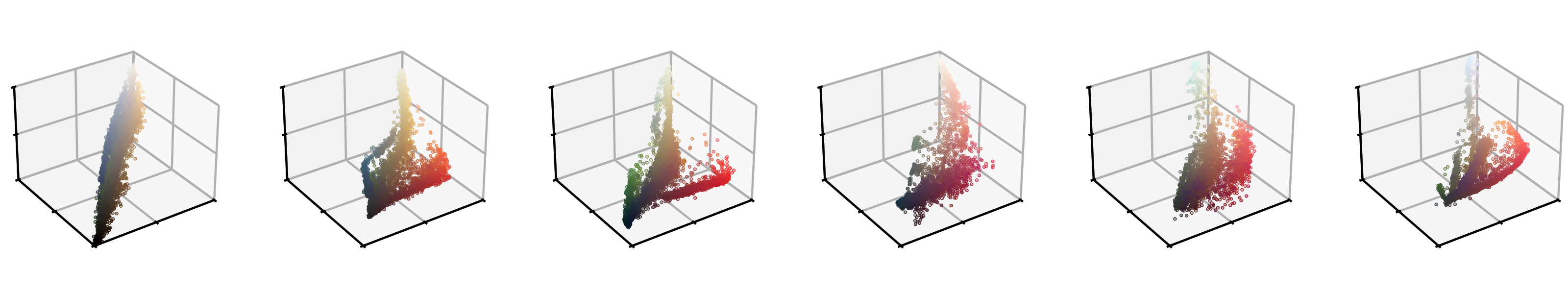}
    \includegraphics[width=0.85\columnwidth]{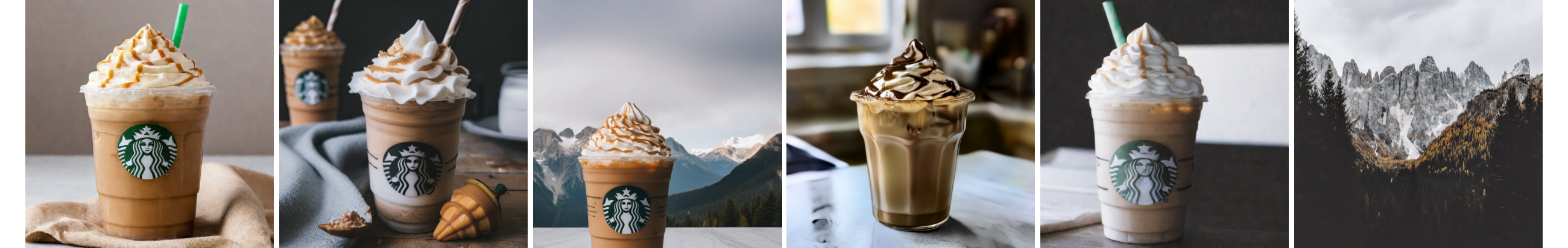} 
    \includegraphics[width=0.85\columnwidth]{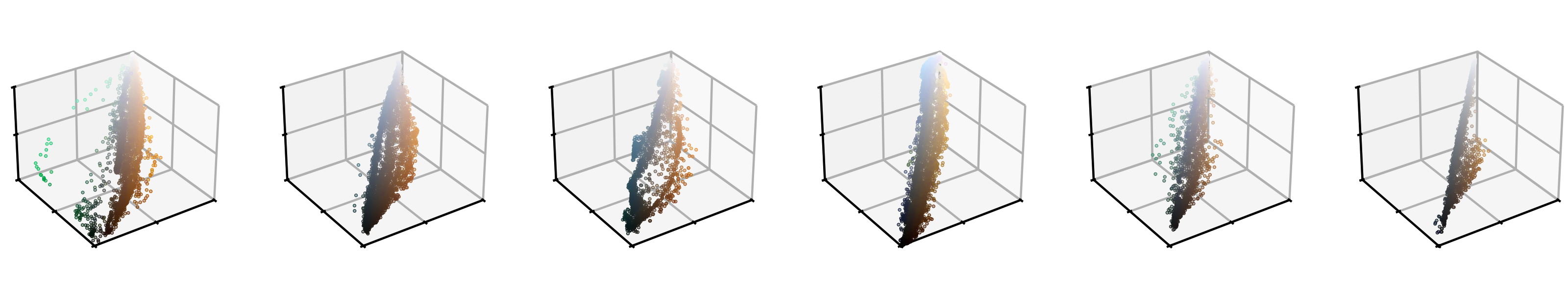}
    \includegraphics[width=0.85\columnwidth]{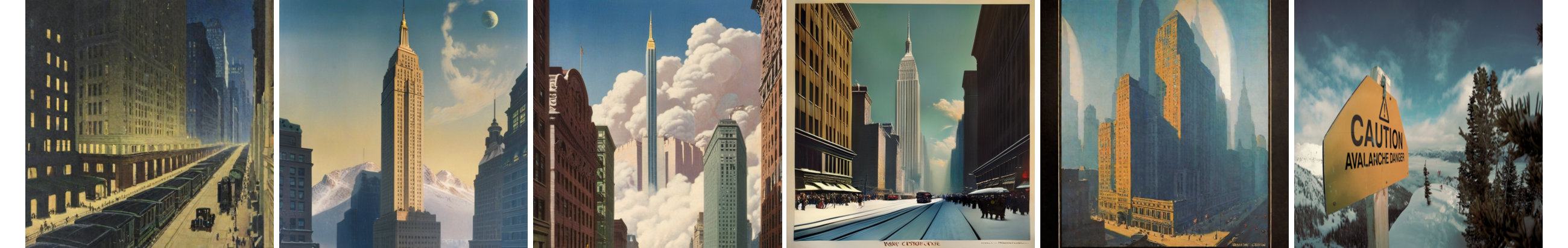} 
    \includegraphics[width=0.85\columnwidth]{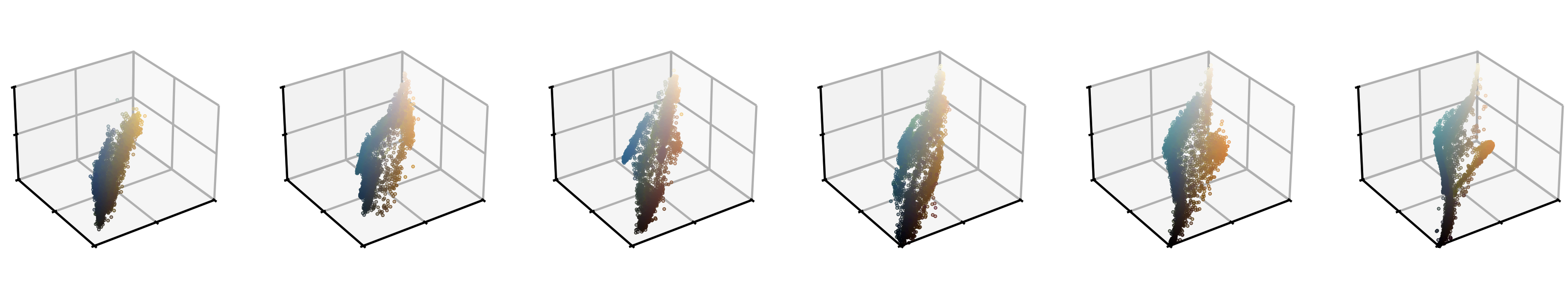}
     \caption{Text-to-image generation conditioned on a reference color distribution. Comparison with stylized generation methods. Examples from the test set. All images are generated by RealVisXL except of RB-Modulation running on Stable Cascade. Other methods have greater mismatch in color distributions and also often transfer some composition details such as: a forest (first row), a field of flowers (second row), a bouquet (third row), mountains (fourth row), cloudy sky and mountains (last row).}
    \label{fig:sdxl_stylized}
\end{figure}   

    

\begin{figure*}[ht!]
    \centering
    \includegraphics[width=1.0\linewidth]{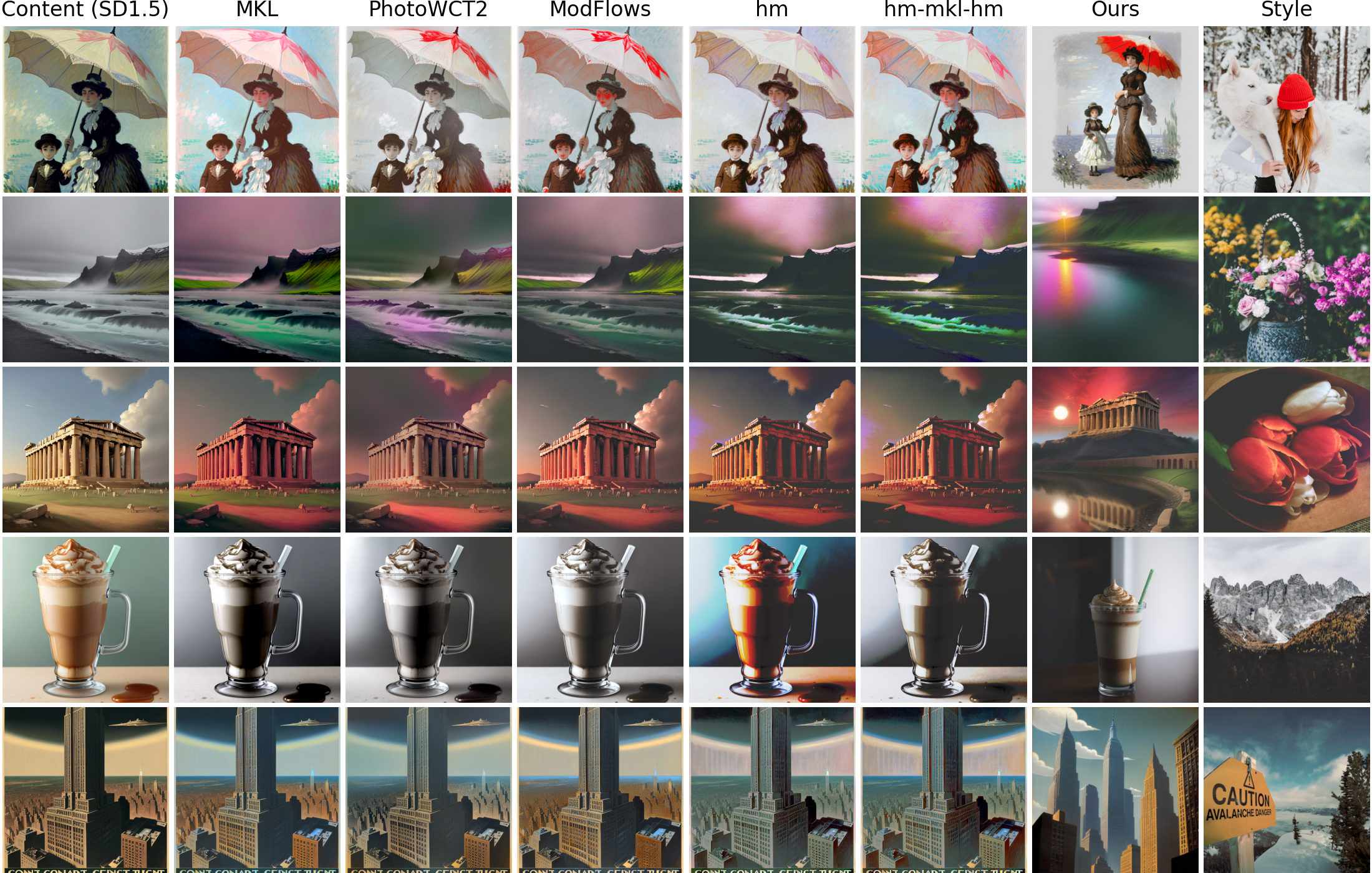}
    \caption{Text-to-image generation conditioned on a reference color distribution. Qualitative comparison with color transfer methods for SD-1.5. Examples from the test set. Please refer to the Table \ref{tab:table-content-style-dreamshaper} for the  quantitative comparison.}
    \label{fig:sw_vs_colortransfer-sd15}
\end{figure*}

\begin{algorithm}
    \caption{Color Conditional Generation with Sliced Wasserstein Guidance for latent text-to-image diffusion}
    \label{alg:full_diffusion}
    \begin{algorithmic}[1]
    \Require 
    \Statex $\text{DDIM}$: Diffusion DDIM scheduler
    \Statex $s_{\theta}$: UNet model
    \Statex $D$: Decoder of the Variational Autoencoder
    \Statex $E$: Encoder of the Variational Autoencoder
    \Statex $\tau$: Text embeddings for conditioning
    \Statex $I_{\text{ref}}$: Reference image
    \Statex $\gamma$: Guidance scale factor
    \Statex $M$: Number of optimization steps
    \Statex Initialize $x_t \sim \mathcal{N}(0, I)$
    \For{$t$ \textbf{in} $\{0, \dots, T-1\}$}
        \State $u \gets \mathbf{0}$ (tensor with same shape as $x_t$)
        \For{$j$ \textbf{in} $\{1, \dots, M\}$}
            \State $x'_t \gets x_t + u$
            
            \State $\epsilon \gets s_{\theta}(x'_t, t, \tau)$
            \State $\hat{x}_0 \gets \text{DDIM}(\epsilon, t, x'_t)$
            
            \State $I_{\text{gen}} \gets D(\hat{x}_0)$
            \State $P_{\text{gen}} \gets \text{pixels\_from\_image}(I_{\text{gen}})$
            
            
            
            \State $K \gets 10$ \Comment{Number of slices}
            \For{$k$ \textbf{in} $\{1, \dots, K\}$}
                \State $R \gets \text{rand\_rotation\_matrix}()$
                
                \State $P_{\text{gen}}^{R} \gets P_{\text{gen}}^T R$
                \State $P_{\text{ref}}^{R} \gets P_{\text{ref}}^T R$
                
                \For{$d$ \textbf{in} $\{1, \dots, 3\}$}
                    \State $x_{\text{rot}} \gets P_{\text{gen}}^{R}[:, d]$
                    \State $y_{\text{rot}} \gets P_{\text{ref}}^{R}[:, d]$
    
                    \State $\text{cdf}_x \gets \text{get\_cdf}(x_{\text{rot}})$
                    \State $\text{cdf}_y \gets \text{get\_cdf}(y_{\text{rot}})$
                    
                    \State $\mathcal{L} \gets \mathcal{L} + \text{mean}(|\text{cdf}_x - \text{cdf}_y|)$
                \EndFor
            \EndFor
            
            \State $g_u \gets \nabla_{u}\mathcal{L}(u)$
            
            \State $g_u \gets \frac{g_u}{\text{std}(g_u)}$
            \State $u \gets u - \lambda g_u$
        
        \EndFor
        
        \State $x^*_t \gets x_t + u$
        \State $\epsilon_{\text{cond}} \gets s_{\theta}(x^*_t, t, \tau)$
        \State $\epsilon_{\text{uncond}} \gets s_{\theta}(x^*_t, t, \emptyset)$
        
        \State $\epsilon_{\text{guided}} \gets \epsilon_{\text{uncond}} + \gamma (\epsilon_{\text{cond}} - \epsilon_{\text{uncond}})$
    
        \State $x_t \gets \text{DDIM}(\epsilon_{\text{guided}}, t, x^*_t)$
        
    \EndFor
    \end{algorithmic}
\end{algorithm}

\begin{figure}[h!]
\centering
    \includegraphics[width=0.85\columnwidth]{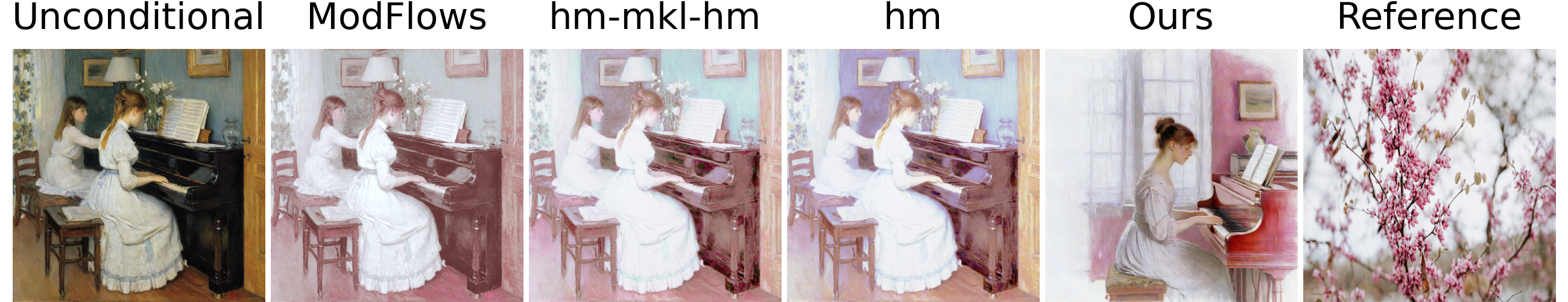} 
    \includegraphics[width=0.85\columnwidth]{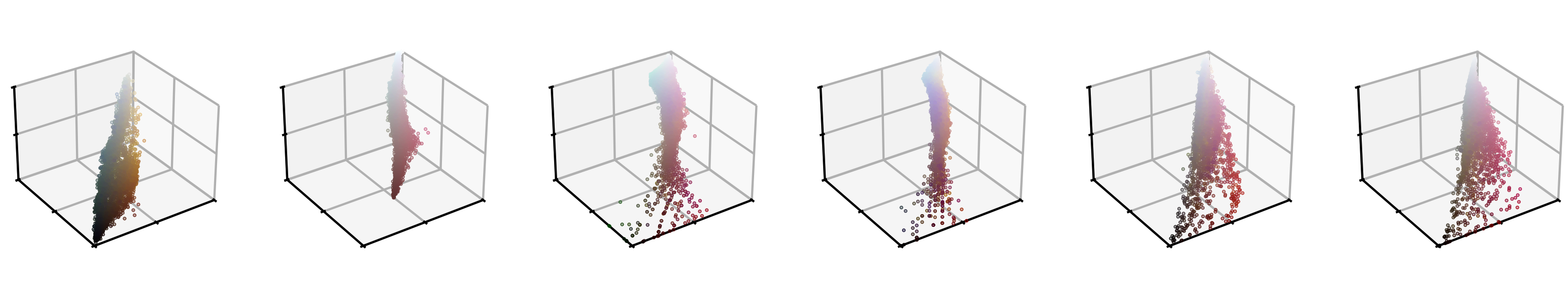}
    \includegraphics[width=0.85\columnwidth]{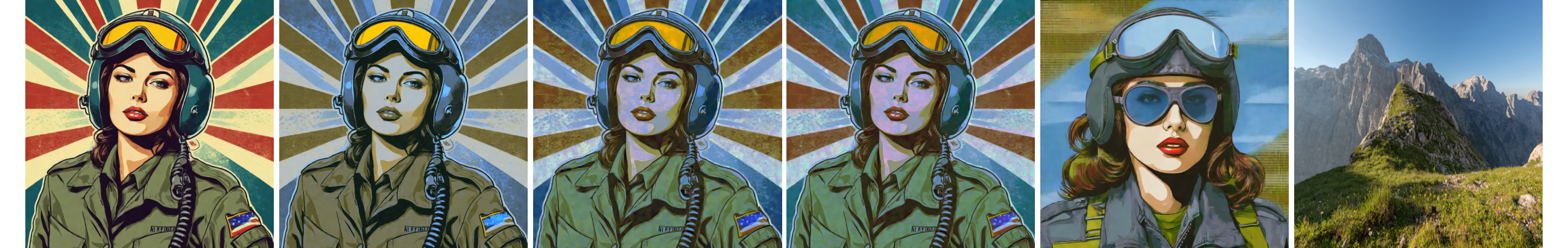} 
    \includegraphics[width=0.85\columnwidth]{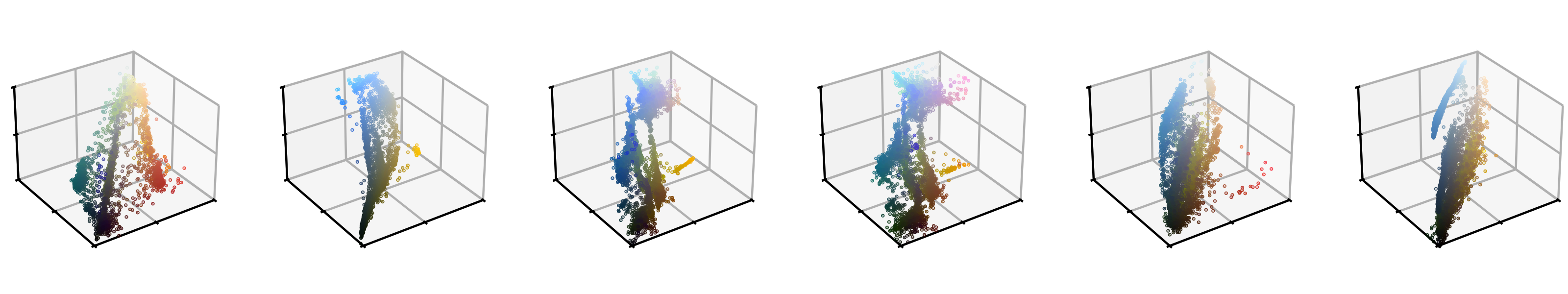}
    \includegraphics[width=0.85\columnwidth]{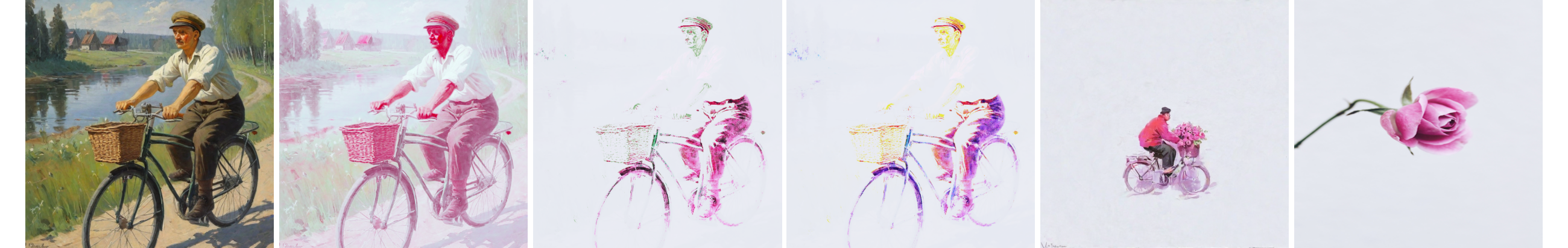} 
    \includegraphics[width=0.85\columnwidth]{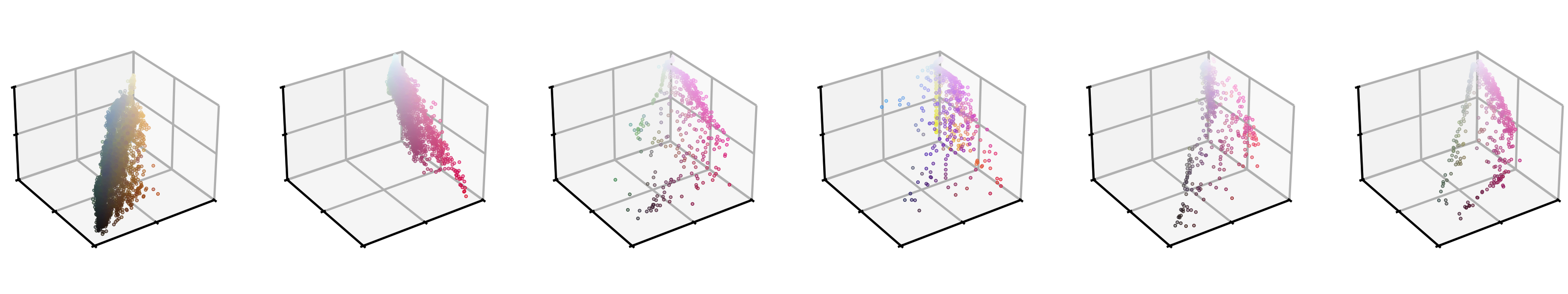}
    \includegraphics[width=0.85\columnwidth]{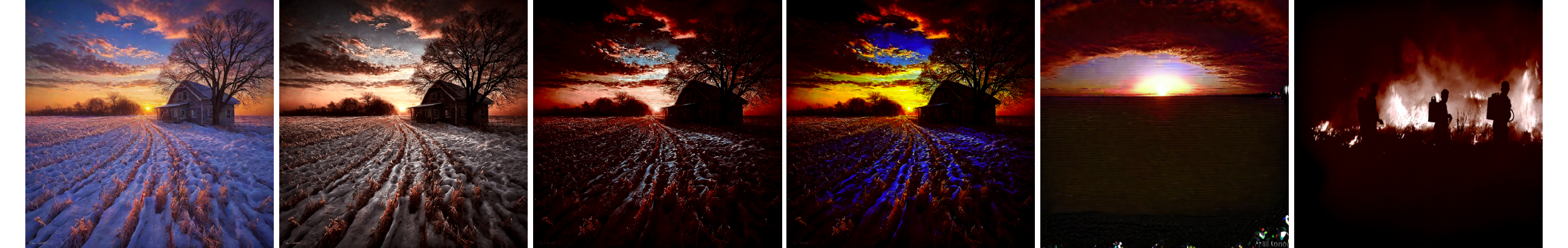} 
    \includegraphics[width=0.85\columnwidth]{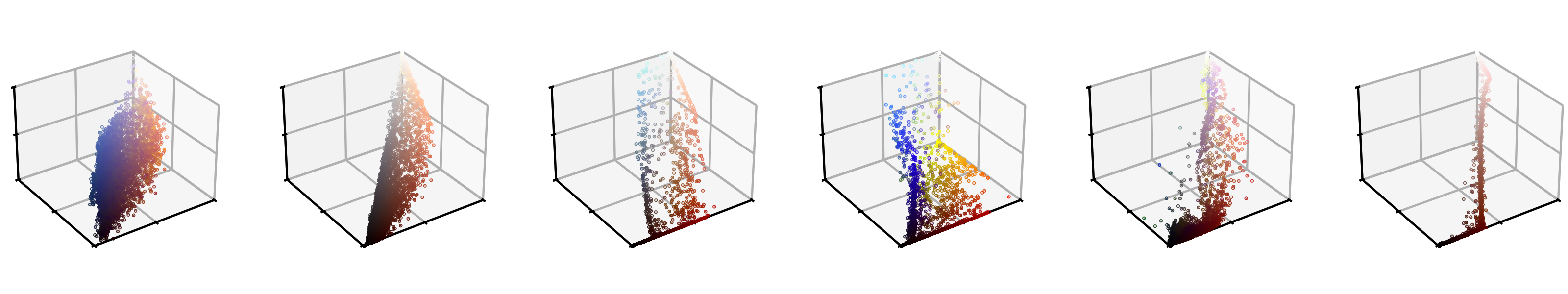}
    \includegraphics[width=0.85\columnwidth]{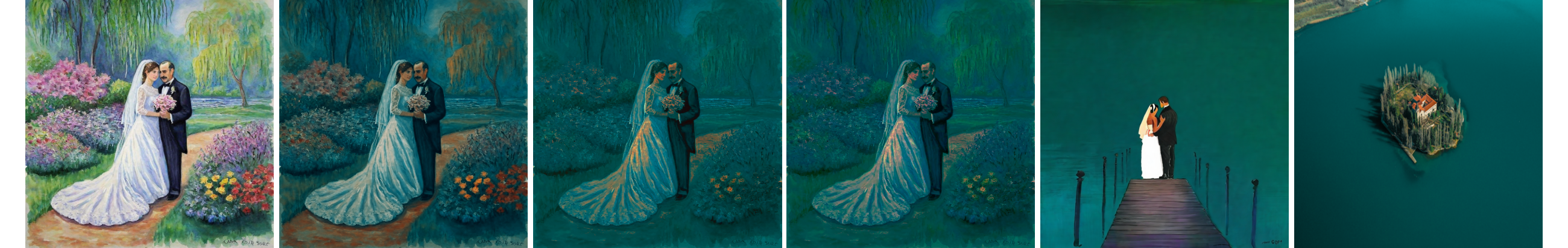} 
    \includegraphics[width=0.85\columnwidth]{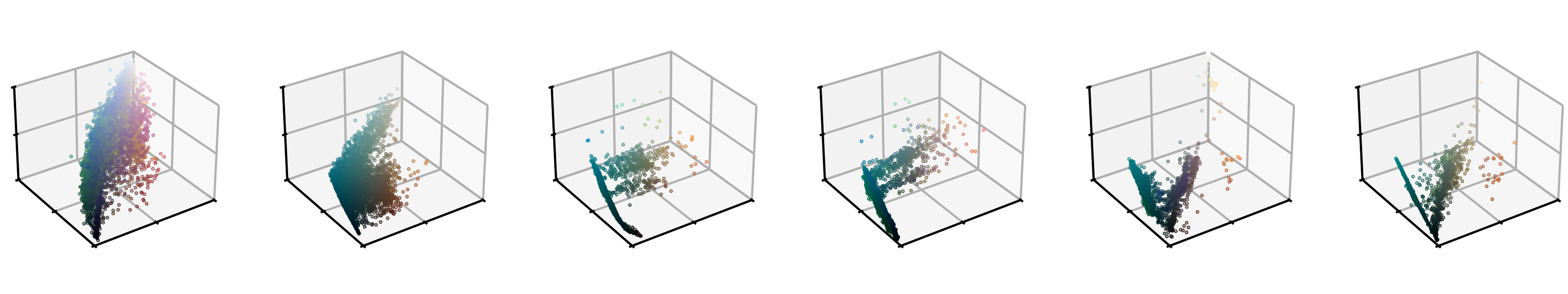}
     \caption{Text-to-image generation conditioned on a reference color distribution. Comparison with color transfer methods. Examples from the test set. Color transfer methods (ModFlows, hm and hm-mkl-hm) are applied to the Unconditional RealVisXL generations. Images produced by color transfer methods have greater mismatch in color distributions with the reference when compared to SW-Guidance.}
    \label{fig:sdxl_color_transfer}
\end{figure}

\begin{table}[h]
\caption{Text-to-image generation conditioned on a reference color distribution. Quantitative evaluation, SD1.5 \cite{dreamshaper8}. 2-Wasserstein distance between the color distributions measures color similarity, CLIP-IQA and CLIP-T are quality and content scores. All color transfer methods \cite{plenopticam, gonzales1977gray, chiu2022photowct2, larchenko2024color, pitie2007linear, reinhard2001color, yoo2019photorealistic, an2020ultrafast} are applied to the Unconditional SD1.5 generations.}

\label{tab:table-content-style-dreamshaper}
\begin{minipage}[ht]{0.5\textwidth}
      \centering
      \begin{tabular}{ll}
        \toprule
        \multicolumn{2}{c}{2-Wasserstein distance \cite{villani2009optimal} $\downarrow$} \\
        \cmidrule(r){1-2}
        Algorithm        & mean $\pm$ std of mean \\
        \midrule
        SW-Guidance SD-1.5 (ours) 	 & \textbf{0.0328} 	 $\pm$ 	 \textbf{0.0003} \\
        hm-mkl-hm  \cite{plenopticam} 	 & \underline{0.0572} $\pm$ \underline{0.0011}  \\
        hm  \cite{gonzales1977gray} 	 & $0.0896 \pm 0.0019$  \\
        PhotoWCT2  \cite{chiu2022photowct2} 	 & 0.1085 	 $\pm$ 	 0.0016  \\
        ModFlows  \cite{larchenko2024color} 	 &  0.1182 	 $\pm$ 	 0.0015  \\
        Colorcanny  \\ 
        ControlNet SD-1.5 \cite{laion_art_en_colorcanny} 	 & 0.1183 	 $\pm$ 	 0.0016\\
        MKL  \cite{pitie2007linear} 	 & 0.1274 	 $\pm$ 	 0.0018  \\
        CT  \cite{reinhard2001color} 	 & 0.1412 	 $\pm$ 	 0.0019  \\
        WCT2  \cite{yoo2019photorealistic} 	 & 0.1425 	 $\pm$ 	 0.0018  \\
        PhotoNAS  \cite{an2020ultrafast} 	 & 0.1724 	 $\pm$ 	 0.0017  \\
        InstantStyle SD-1.5 \cite{wang2024instantstyle} 	 & 0.2802 	 $\pm$ 	 0.0043  \\
        Unconditional SD-1.5     & 0.4062 	 $\pm$ 	 0.0063\\
        \bottomrule
      \end{tabular}
    \end{minipage}
    \begin{minipage}[ht]{0.5\textwidth}
    \centering
      \begin{tabular}{ll}
        \toprule
        \multicolumn{2}{c}{Content scores} \\
        \cmidrule(r){1-2}
                CLIP-IQA \cite{wang2022exploring}  $\uparrow$ & CLIP-T \cite{radford2021learning} $\uparrow$ \\
        \midrule

 	  \underline{0.2221} 	 $\pm$ 	 \underline{0.0029} 	 & 0.2624 	 $\pm$ 	 0.0017\\
 	  0.2013 	 $\pm$ 	 0.0030 	 & 0.2656 	 $\pm$ 	 0.0017\\
	  0.2054 	 $\pm$ 	 0.0029 	 & 0.2700 	 $\pm$ 	 0.0016\\
	  0.1796 	 $\pm$ 	 0.0026 	 & 0.2621 	 $\pm$ 	 0.0016\\
	  0.2035 	 $\pm$ 	 0.0030 	 & 0.2640 	 $\pm$ 	 0.0016\\
  \\
	  0.1953 	 $\pm$ 	 0.0025 	 & 0.2600 	 $\pm$ 	 0.0018\\
	  0.1880 	 $\pm$ 	 0.0028 	 & 0.2700 	 $\pm$ 	 0.0016\\
  0.1826 	 $\pm$ 	 0.0027 	 & \underline{0.2713} 	 $\pm$ 	 \underline{0.0016}\\
  0.1819 	 $\pm$ 	 0.0026 	 & \textbf{0.2761} 	 $\pm$ 	 \textbf{0.0016}\\
  \textbf{0.2878} 	 $\pm$ 	 \textbf{0.0027} 	 & 0.2590 	 $\pm$ 	 0.0015\\
  0.1891 	 $\pm$ 	 0.0020 	 & 0.2554 	 $\pm$ 	 0.0018\\
	  0.2010 	 $\pm$ 	 0.0023 	 & 0.2837 	 $\pm$ 	 0.0016\\
        \bottomrule
      \end{tabular}
\end{minipage}
\end{table}

\section{Additional results}

\begin{figure}[b]
  \centering
  \includegraphics[width=.85\linewidth]{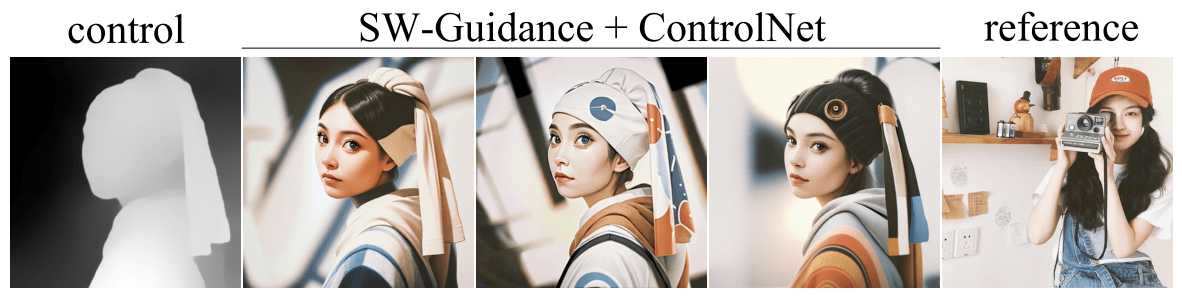}
  \includegraphics[width=.85\linewidth]{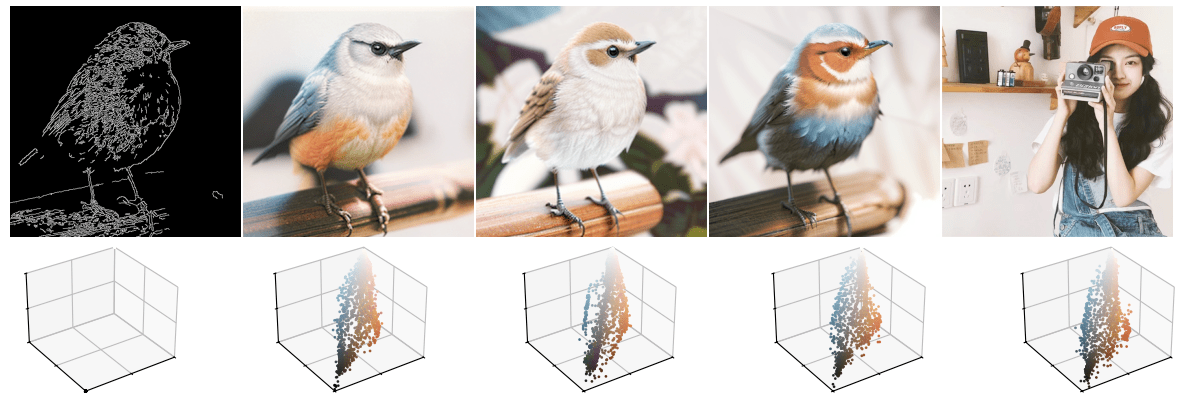}
  \caption{SW-Guidance combined with depth and canny controls.}
  \label{fig:controlnet_sw}
\end{figure}

\begin{figure}[]
  \centering
  \includegraphics[width=.85\linewidth]{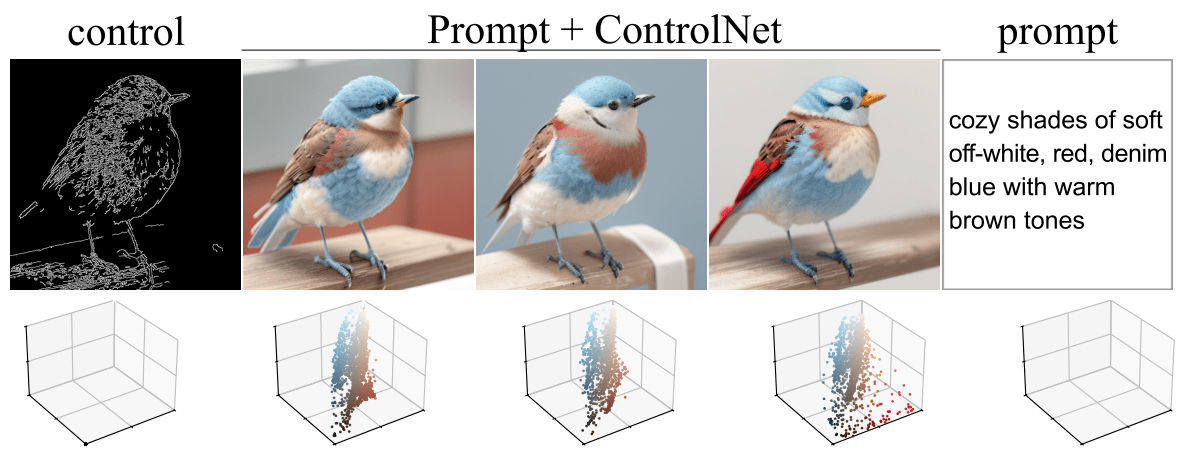}
  \includegraphics[width=.85\linewidth]{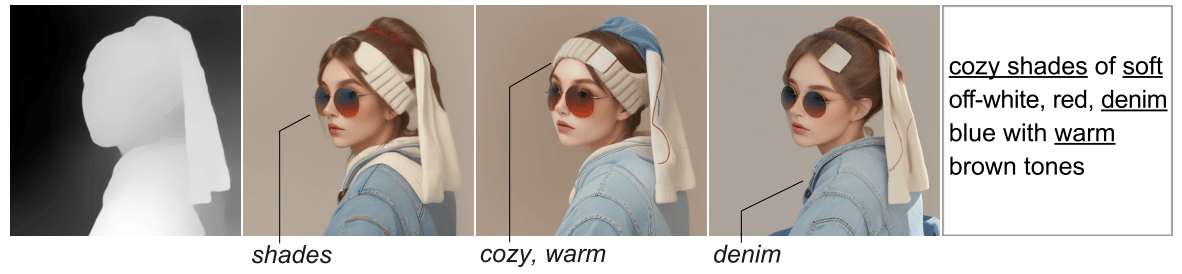}
  \includegraphics[width=.85\linewidth]{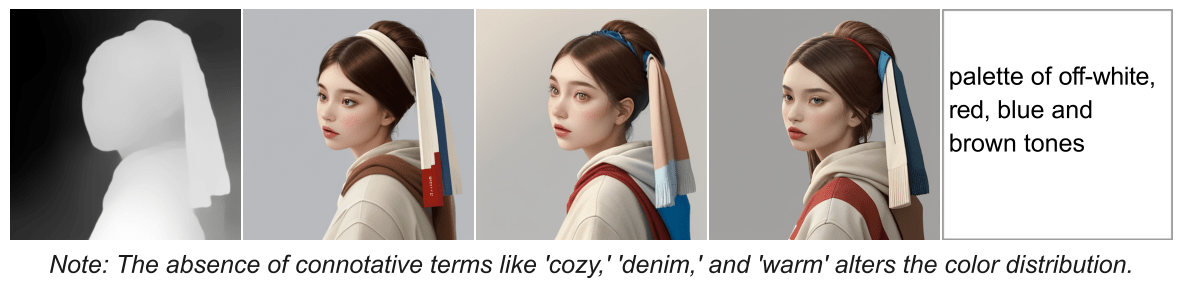}
  \caption{Text prompt mimicking the color distribution of Fig.~\ref{fig:controlnet_sw}.}
  \label{fig:controlnet_text}
\end{figure}

\textbf{Dependence on learning rate~~} The effect of learning rates on the performance of sliced Wasserstein-based guidance is given in Fig.~\ref{fig:ablation_on_lr}. The learning rate has a significant impact on the 2-Wasserstein distance, with an optimal value of 0.04, beyond which the loss plateaus and then increases. In contrast, the CLIP-IQA and CLIP-T metrics exhibit linear relationships with respect to the learning rate, suggesting no minimum or optimal value within the range tested.

\begin{figure}[h!]
    \centering
    \includegraphics[width=0.6\textwidth]{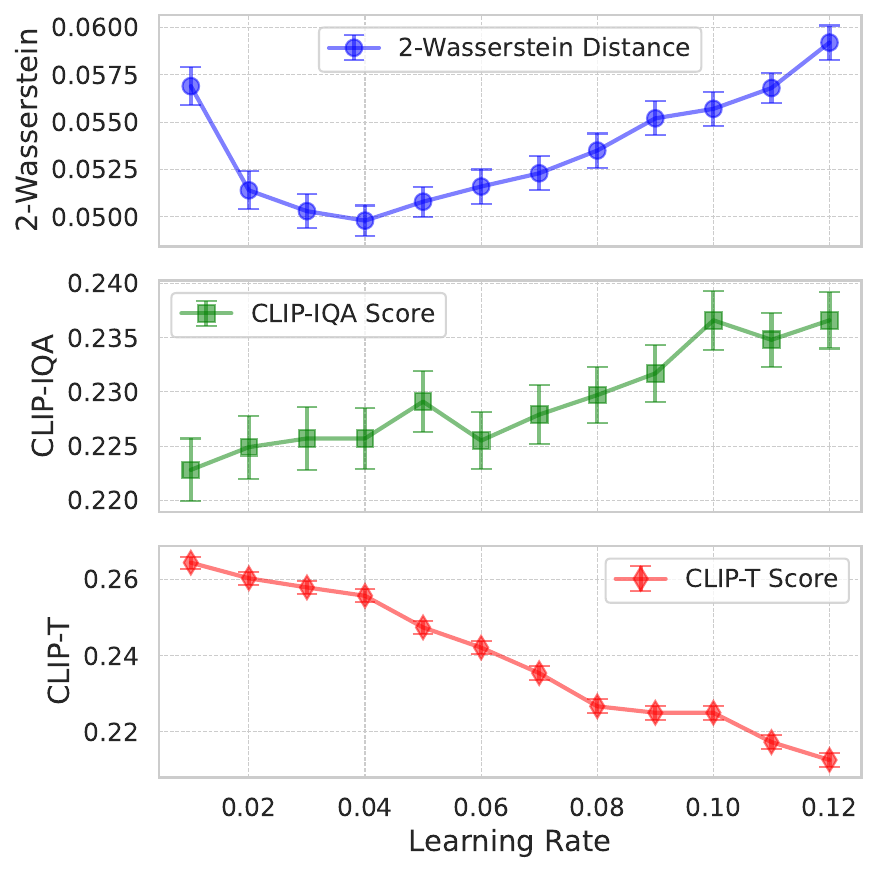}
    \caption{The performance metrics dependence on the learning rate for SD-1.5.}
    \label{fig:ablation_on_lr}
\end{figure}

\textbf{Text prompts to control the color~~} Using text prompts for controlling the color has several major issues. The first row of Fig.~ \ref{fig:controlnet_text} shows that the red color specified by the prompt is often ignored. The second row of Fig.~ \ref{fig:controlnet_text} shows how the same prompt applied to another control image produces completely different color distribution. It also introduces content details due to connotative words like ``denim'', ``warm'' and ``soft''. Removing these words alters the colors, making the prompt design tedious.
Please note, that color naming is often connotative, and words like ``bloody red'' and ``lime'' will introduce content details.

\textbf{Content Diversity Evaluation~~} To evaluate the content diversity of the generated images, we computed the FID between unconditional SDXL generations and those obtained using various style guidance methods. To mitigate potential effects of color distribution alignment on the FID, all evaluations were conducted after conversion to grayscale histogram normalized images. The results on our generated dataset are summarized in Table~\ref{table:diversity_scores}.


\begin{table}[h]
\centering
\caption{FID scores between unconditional SDXL generations and stylized outputs.}
\begin{tabular}{l c}
\toprule
\textbf{Method Used with SDXL} & \textbf{FID Score (vs. Unconditional)} \\
\midrule
Mean/Covariance Matching Only & 53.16 \\
SW-Guidance (Ours)            & 58.40 \\
InstantStyle                  & 58.95 \\
IP-Adapter                    & 71.06 \\
RB Modulation                 & 72.75 \\
\bottomrule
\end{tabular}
\label{table:diversity_scores}
\end{table}

The results show that SW-Guidance maintains content diversity comparable to other state-of-the-art stylization methods. While there is a slight FID increase compared to simple moment matching (which provides weaker color control), our method preserves substantially more diversity than stronger stylization techniques such as IP-Adapter and RB Modulation.

\section{Experimental Details}

The experiments were conducted on images generated by SD-1.5 (Dreamshaper-8) and SDXL (RealVisXL-V4) using the first 1000 ContraStyles prompts \cite{contra_styles}. 
No negative prompts or negative embeddings were used.

We fixed the CFG scale to 5 and the resolution to 768x768 for SDXL. 
For SD-1.5, the CFG scale was set to 8 and the resolution to 512x512. 
Both the SDXL and SD pipelines used the DDIM scheduler with 30 inference steps. 
Images for RB-Modulation were produced by Stable Cascade with a resolution of 1024x1024 and a total of 30 inference steps (20 for stage C and 10 for stage B).
Method-specific settings are provided below.

\begin{figure*}[ht!]
    \centering
    \includegraphics[width=0.92\linewidth]{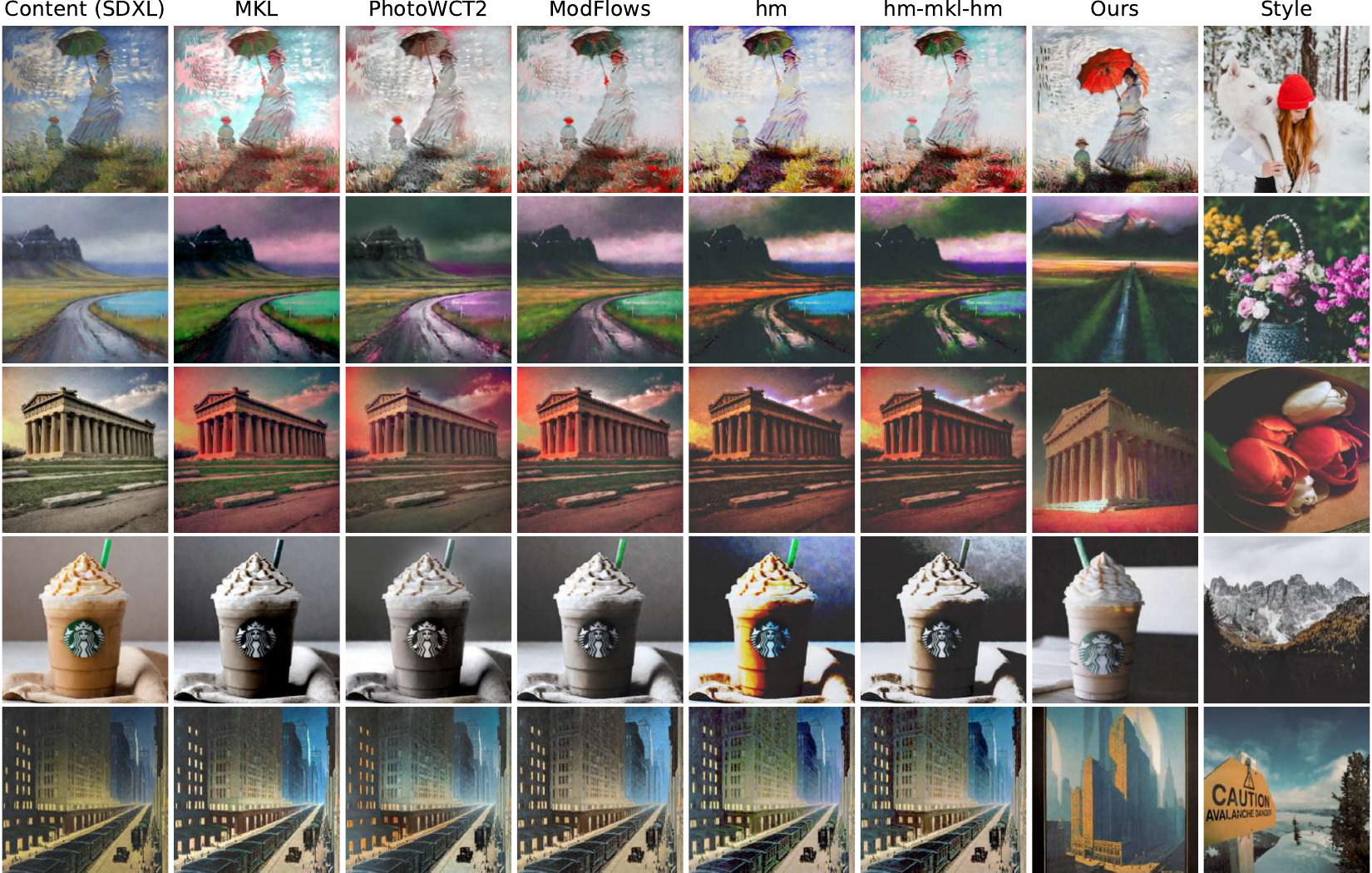}
    \caption{Text-to-image generation conditioned on a reference color distribution. Qualitative comparison with color transfer methods for SDXL. Please refer to the Table \ref{tab:table-content-style-sdxl} in the main text for the  quantitative comparison.}
    \label{fig:sw_vs_colortransfer}
\end{figure*}

\textbf{Baselines~~} For InstantStyle, the SDXL and SD-1.5 scales were set to 1.0. 
For IP-Adapter, the SDXL scale was set to 0.5 because higher scales tended to ignore the text prompt, producing variations of a reference image. 
The Colorcanny ControlNet for SD-1.5 had a conditioning scale of 1.0. 
For SW-Guidance, the SD-1.5 learning rate was $lr = 0.04$. 
In the SDXL version of SW-Guidance, we did not apply gradient normalization (line 23, Algorithm \ref{alg:full_diffusion}) and set the constant $lr = 0.01 \cdot 10^4 = 100$.

For evaluation, we used publicly available models and algorithms (i.e., none of them were re-trained or re-implemented). We ran color transfer baselines with the default settings provided by the authors.

We observed that PhotoNAS demonstrated a dependency on the resolution of input images. Specifically, the method was optimized for 512×512 inputs and exhibited noticeable variations in performance, including high-frequency defects when images of different resolutions were used. Therefore, the evaluations for SDXL and DreamShaper were different, as SDXL outputs images in higher resolutions.

\textbf{Metrics~~} The 2-Wasserstein distance was estimated with 3000 randomly sampled points using the ``emd'' function from the POT library \cite{flamary2021pot}.
The CLIP-IQA metric implementation was taken from the `piq` Python library \cite{kastryulin2022piq}. 
The CLIP-T metric was calculated using the model ``openai/clip-vit-large-patch14'' with an embedding dimension of 768.

\textbf{Hardware~~} The experiments were conducted on a single workstation equipped with two Nvidia RTX 4090 GPU accelerators and 256 GB of RAM.

\textbf{Prompts for illustrations~~} 
\textit{Fig.~\ref{fig:main_image}, (main text)}:
\begin{enumerate}
    \item Astronaut in a jungle, detailed, 8k
    \item A cinematic shot of a cute little rabbit wearing a jacket and doing a thumbs up
    \item extremely detailed illustration of a steampunk train at the station, intricate details, perfect environment
\end{enumerate}

\textit{Fig.~\ref{fig:ablations}, (main text)}:
\begin{enumerate}
    \item Sunflower Paintings | Sunflowers Painting by Chris Mc Morrow - Tuscan Sunflowers Fine Art ...
    \item b8547793944 Formal dress suit men male slim wedding suits for men double breasted mens  suits wine red costume ternos masculino fashion 2XL
    \item martino leather chaise sectional sofa 2 piece apartment and sets from china interio tucson dining room rustic furniture with home the company
    \item 1125x2436 Rainy Night Man With Umbrella Scifi Drawings Digital Art
\end{enumerate}


\textit{ Fig.~\ref{fig:sw_vs_colortransfer}, Fig.\ref{fig:sw_vs_colortransfer-sd15} and Fig.~\ref{fig:sdxl_stylized} (Appendix) }:
\begin{enumerate}
    \item Woman with a Parasol - Madame Monet and Her Son - Image: Monet woman with a parasol right
    \item """Iceland: Through an Artist's Eyes  part 4 Rainy Day Adventures"" original fine art by Karen Margulis"
    \item Parthenon Poster featuring the digital art Parthenon Of Nashville by Honour Hall
    \item How To Make A Caramel Frappuccino At Home
    \item New York Central Building, Park Avenue, 1930,Vintage Poster, by Chesley Bonestell
\end{enumerate}

\textit{ Fig.~\ref{fig:sdxl_color_transfer} (Appendix) }:
\begin{enumerate}
    \item Francis Day - The Piano Lesson Frederick Childe Hassam - The Sonata George Bellows - Emma at the Piano Theodore Robinson - Girl At Piano Pierre-Auguste Renoir - The Piano Lesson  - Louise Abbema - At the Piano Gustave…
    \item Illustration pour Girl retro military pilot pop art retro style. The army and air force. A woman in the army - image libre de droit
    \item Victor Tsvetkov The Bicycle Ride 1965 Russian Painting, Russian Art, Figure Painting, Bicycle Painting, Bicycle Art, Socialist Realism, Soviet Art, Illustration Art, Illustrations
    \item """""""There Was A Time"""" Milwaukee, Wisconsin Horizons by Phil Koch USA"""
    \item Poster featuring the painting Monet Wedding by Clara Sue Beym
\end{enumerate}


\end{document}